\newcommand{\defeq}{\vcentcolon=}
\DeclareMathOperator*{\argmin}{arg\,min}
\begin{document}
\title{Group privacy for personalized federated learning}
%
%

\author{Filippo Galli\inst{1} \and
Sayan Biswas\inst{2,3} \and
Kangsoo Jung\inst{3} \and
Tommaso Cucinotta\inst{4} \and
Catuscia Palamidessi\inst{2,3}}
\authorrunning{F. Galli et al.}
%
\institute{Scuola Normale Superiore, Pisa, Italy \and INRIA, Palaiseau, France \and
LIX, \'Ecole Polytechnique, Palaiseau, France
\and 
Scuola Superiore Sant'Anna, Pisa, Italy\\
\email{filippo.galli@sns.it, 
\{sayan.biswas, gangsoo.zeong\}@inria.fr\\
tommaso.cucinotta@santannapisa.it,
catuscia@lix.polytechnique.fr}}
\maketitle              
\begin{abstract}
Federated learning (FL) is a particular type of collaborative machine learning, where participating peers/clients process their data locally, sharing only updates to the collaborative model. This enables to build privacy-aware distributed machine learning models, among others. The goal is the optimization of a statistical model's parameters by minimizing a cost function of a collection of datasets which
are stored locally by a set of clients. This process exposes the clients to two issues: leakage of private information and lack of personalization of the model. On the other hand, with the recent advancements in various techniques to analyze and handle data, there is a surge of concern for the privacy violation of the participating clients. To mitigate this, differential privacy and its variants serve as a standard for providing formal privacy guarantees. Often the clients represent very heterogeneous communities and hold data which are very diverse. Therefore, aligned with the recent focus of the FL community to build a framework of personalized models for the users representing their diversity, it is also of utmost importance to protect the clients' sensitive and personal information against potential threats. To address this goal we consider $d$-privacy, also known as metric privacy, which is a variant of local differential privacy, 
using a  a metric-based obfuscation technique that preserves the topological distribution of the original data. To cope with the issue of protecting the privacy of the clients and allowing for personalized model training to enhance the fairness and utility of the system, we propose a method to provide group privacy guarantees exploiting some key properties of $d$-privacy which enables personalized models under the framework  of FL. We provide with theoretical justifications to the applicability and experimental validation on real-world datasets to illustrate the working of the proposed method.

\keywords{federated learning \and differential privacy \and d-privacy \and personalized models.}
\end{abstract}
%
%
%
\section{Introduction}\label{section:intro}

With the recent advancements in technology, there has been a significant surge in the value and need of data to perform various kinds of statistical analyses by academia and industry alike. In particular, with modern development in machine learning and data science, the requirement of collecting massive datasets from users, often containing their sensitive personal information, is becoming more and more popular. Their functionality varies from descriptive queries to training large machine learning models with millions of parameters for more complex tasks in hand.

There are multiple advantages to having access to all the necessary
data in a single location, mostly related to efficiency: faster computation, reduced communication costs between the computing and storage nodes, and, in general, a more direct control over the population of data points. However, alongside this massive rise in need to collect and store data, the risks of violation of the users' privacy are becoming more and more significant and concerning~\cite{nist,lemetayer:hal-01420983}. Of late, users are increasingly concerned about the use, retention, access,
and a potential involuntary disclosure of their private information. Federated learning (FL)~\cite{fed-l-0} is a collaborative machine learning paradigm where the devices of the users serve not only for data harvesting but they are also directly involved in
training a global predictive model without ever sending the raw data to a central server, taking rudimentary a step towards the goal of protecting the users' privacy. 

In this context, the central server orchestrates rounds of
parameter fitting by selecting a random subset of users and sending them
a model for local optimization. Following this, the users optimize their model parameters to minimize a loss function over their local data and communicate back to the central server the updated
model, typically computed using gradient-based methods. The server aggregates the updates received from the participating users to the global model and, thus, a new round can commence with a new subset of users. The process is repeated until convergence, i.e. until there is no substantial decrease in the loss function round over round.

Nonetheless, avoiding the release of user's raw data only provides a lax protection to potential attacks violating the users' privacy~\cite{hitaj2017deep, nasr2019comprehensive, deep-leakage}, as it falls in the pitfall of ``only releasing summary
statistics''~\cite{foundations-of-dp}, which is the set of updated model parameters transmitted
to the central server.

One of the most successful approaches to address this issue in a rather robust way is along the lines of \emph{Differential Privacy} (DP)~\cite{DworkDP1,DworkDP2}, which mathematically guarantees that a query output for a dataset does not change significantly regardless of whether a specific personal record is contained or not. For instance, a model trained for next-word prediction under this framework will not make suggestions that may potentially leak a user's private data.

However, the classical central version of DP requires a trusted curator who is responsible for adding noise to the data before publishing or performing any kind of analytics on it. A major drawback of such a central model is that it is vulnerable to security breaches via a single point of failure because of its over-dependency on the central server for storage. Moreover, there is the risk of having an adversarial curator. To circumvent the need of such a central trusted server, a local model of DP a.k.a. \emph{local differential privacy} (LDP)~\cite{DuchiLDP} has been in the spotlight recently where the users locally perturb their personal data using LDP mechanisms (e.g. $k$-randomized response~\cite{kairouz2016discrete}) before communicating them to the server.

One of the recently popularized standards in location privacy is geo-indistinguishability ~\cite{broadening}, which optimizes the quality of service (QoS) of the users while preserving a generalized notion of LDP on their location data. The obfuscation mechanism of geo-indistinguishability depends on the \emph{Euclidean} distance between the original location of users and a potential noisy location reported by them~\cite{Bordenabe:14:CCS,Fernandes:21:LICS}. This metric-based generalized variant of LDP, when used beyond the scope of location data under Euclidean distance, is known as \emph{$d$-privacy} by the community, which is applicable to any form of datasets under any notion of distance. $d$-privacy can be implemented directly on users' devices (tablet, smartphone, etc.), where users can explicitly control their desired privacy-protection level, while preserving the spatial distribution of their data due to the metric-based obfuscation mechanism. This makes $d$-privacy very appealing. 

In the context of FL, LDP mechanisms obfuscate the local updates to the model released during each round, so that information coming from any user is indistinguishable up to a certain factor. In the trade-off between privacy and accuracy, both central and local paradigms of DP may reduce the overall accuracy of the converged model because of the randomization of the information released by users.  

In general, the central and local models of DP require bounded sensitivity of the query function or a bounded domain for the data coming from the users, respectively. Since the
model parameter vectors are not necessarily bounded a-priori, it is important to acknowledge also another important source of error, which comes from \emph{clipping}~\cite{adaptive} the domain of the information released by the users.
Forcibly truncating the updates to the model leads to a clipped distribution of the parameter vectors as seen by the server and, therefore, the aggregation step of the optimization process is, often, biased \cite{suriyakumar2021chasing}. In particular, the fairness of the model becomes questionable when the \emph{minorities} of the dataset do not get represented after truncation, making the aggregated model
biased against the users harboring the unrepresented data after clipping~\cite{li2020federated, suriyakumar2021chasing}.  This is particularly problematic when the empirical distribution is used for model personalization in FL. Since clustering of the parameter vectors needs to be performed on the sanitized values reported by the users, it is of utmost importance for the server to receive the \emph{unclipped} distribution to engender a notion of fairness in the model.

Learning personalized models is a way to address the problem of heterogeneity in the data distribution of the users involved in the federated optimization. If it is possible to assume that groups of users have local datasets sampled from the same underlying probability distribution, it may be beneficial to optimize federated models by aggregating only the information coming from users of the same group. This can be particularly useful in a number of applications, e.g.: i) natural language processing models trained on datasets with regional dialects and localized language variations; ii) recommender systems for news article suggestions based on the political affiliation of users; iii) facial expression recognition with ethnically diverse dataset members. 

To address the above-mentioned issues, we investigate the possibility of providing local privacy guarantees and allow for personalized models in FL. Thus, we propose the adoption of $d$-privacy mechanisms to obfuscate the information released locally by each user in the federated training, and define an algorithm for personalized federated learning that takes advantage of distance metric-based privacy guarantees for clustering participating users.

More precisely, our key contributions in this paper are outlined as follows:
\begin{enumerate}
    \item We provide an algorithm for the collaborative training of machine learning models, which builds on top of state-of-the-art strategies for model personalization.
    \item We formalize the privacy guarantees in terms of $d$-privacy. To the best of our knowledge, this is the first time that $d$-privacy is used in the context of machine learning. 
    \item We study the Laplace mechanism on high dimensions, under Euclidean distance, based on a generalization of the Laplace distribution in $\mathbb{R}$, and we give a closed form expression.
    \item We provide an efficient procedure for sampling from such distribution. 

\end{enumerate}

The rest of the paper is organized as follows. Section~\ref{section:Background} introduces fundamental notions for federated learning and differential privacy. Section~\ref{section:Related} discusses related work. Section~\ref{section:Algorithm} explains the proposed algorithm for personalized federated learning with group privacy. 
Section~\ref{section:Experiments} validates the proposed procedure through experimental results.
Section~\ref{section:Conclusion} concludes and discusses future work.

\section{Background}\label{section:Background}

\begin{table}[htbp]
\caption{Table of Notations}\label{table:notations}
\centering
\resizebox{\columnwidth}{!}{%
\begin{tabular}{|c|c|}
\hline
Notation & Description \\ \hline
 
$\mathcal{X}$  & Domain of original values \\
$d(.)$ & Distance metric  on $\mathcal{X}$\\
$\mathcal{Y}$  & Domain of secrets \\
$\mathbb{P}_{\mathcal{K}}\left[y\vert x\right]$ & Prob. that mechanism $\mathcal{K}$ reports $x\in\mathcal{X}$ as $y\in\mathcal{Y}$\\
$N$ & Total number of clients\\
$\mathbb{D}$ & Domain of the data points held by the users\\
$k$ & Number of clusters, hypotheses and distributions\\
$n$ &Number of model parameters\\
$f(.)$ & $f\colon \mathbb{R}^n\times \mathbb{D}\mapsto \mathbb{R}_{\geq 0}$; Cost function\\
$\mathcal{D}_j$ & Probability density function of the $j^{\text{th}}$ distribution\\
$Z_c$ & Collection of data points held by client $c$\\
$S_j^*$ & Subset of clients whose data is sampled from $\mathcal{D}_j$\\
$S_j$ & Estimate of $S_j^*$\\
$\theta_j^*$ & Minimizer of $F(\theta_j)$\\
$\theta_j$ & Parameter vector\\
$\tilde{\theta}_j^*$ & Estimate of $\theta_j^*$\\
$F(\theta_j)$ & Expectation of $f(.)$ over $ z \sim \mathcal{D}_j$\\ 
$\tilde{F}(\theta_j)$ & Empirical estimate of $F(\theta_j)$\\
$\tilde{F_c}(\theta_j;Z_c)$ & $\tilde{F}(\theta_j)$ evaluated on client's $c$ data points\\
$\hat{\theta}_{j, c}^{(t)}$ & Sanitized and updated $j^{\text{th}}$ parameter vector released by $c$\\
$\mathcal{L}_{\varepsilon}$ & $\mathcal{L}_{\varepsilon}\colon \mathbb{R}^n\mapsto \mathbb{R}^n$; Laplace mechanism providing $\varepsilon$-$d$-privacy\\ 
$\mathcal{L}_{x_0,\varepsilon}(x)$ & $Ke^{-\varepsilon d(x,x_0)}$ where $K = \frac{\varepsilon^n\Gamma(\frac{n}{2})}{2 \pi^{\frac{n}{2}}\Gamma(n)}$\\
$\gamma_{\varepsilon, n}(r)$ & Gamma distribution with shape $n$ and rate $\varepsilon$.\\
$\mathbb{S}_n(r)$ & Surface of the sphere in $\mathbb{R}^n$ of radius $r$\\
$\mathcal{G}_n(0, \sigma^2)$ & Gaussian distribution in $\mathbb{R}^n$\\
$\Gamma(.)$ & Gamma function\\
$\nu$ &  Noise multiplier\\
${}_{1}x_n$ & A unit vector in $\mathbb{R}^n$\\
$\Delta$ & A generic random vector\\
\hline
\end{tabular}%
}
\end{table}

\subsection{Federated learning and personalization} \label{flp}
Collaborative learning with privacy and communication constraints has
received much attention since the introduction of federated learning
\cite{fed-l-0, fed-l-1, fed-l-2, fed-l-3},
which aims to train a global machine learning model on a distributed
collection of non-i.i.d. datasets stored on devices whose raw data cannot
be disclosed. Focusing on the personalized federated learning setting, we
adopt the notation of \cite{ghosh} to
cast the problem in the framework of stochastic optimization and find the
set of  minimizers $\theta_j^*$ with $j \in \left\{ 1, \dots, k \right\}$ of the cost functions
\begin{equation} \label{erm:1}
F(\theta_j) = \mathbb{E}_{z\sim\mathcal{D}_j} \left[f(\theta_j;z)\right],
\end{equation}
where $\mathcal{D}_j$ is the data distribution which can only be accessed through a collection of datasets $Z_c=\left\{z_i | z_i \sim \mathcal{D}_j, z_i \in \mathbb{D} \right\}$ with $c \in C = \left\{ 1, \dots, N \right\}$, the set of clients. $C$ is partitioned in $k$ disjoint sets 
\begin{equation}\label{erm:2}
S_j^* = \{c \mid \forall z \in Z_c, \, z \sim \mathcal{D}_j\}\,\forall\,j\in[k]
\end{equation}
The mapping $c \rightarrow j$ is
unknown and we rely on estimates $S_j$ of the membership of $Z_c$ to compute
the empirical cost functions 
\begin{equation} \label{erm:3}
\tilde{F}(\theta_j) = \frac{1}{|S_j|}\sum_{c \in S_j} \tilde{F_c}(\theta_j;Z_c)
\end{equation}
with 
\begin{equation} \label{erm:4}
\tilde{F_c}(\theta_j;Z_c) = \frac{1}{|Z_c|}\sum_{z_i \in Z_c}f(\theta; z_i)
\end{equation}

The cost function $f \colon \mathbb{R}^n\times \mathbb{D} \mapsto \mathbb{R}_{\geq 0}$ is applied on $z \in \mathbb{D}$, parametrized by the vector $\theta_j \in \mathbb{R}^n$. Thus, the optimization aims to find, $\forall \,j\,\in[k]$,
\begin{equation} \label{erm:5}
\tilde{\theta}_j^* = \argmin_{\theta_j}\tilde{F}(\theta_j)
\end{equation}

A summary of the main notational elements used throughout the paper can be found in \ref{table:notations}.

\subsection{Differential privacy and machine learning}

\emph{Differential privacy (DP)}~\cite{DworkDP1,DworkDP2},  introduced as a property of
queries of statistical databases to measure information leakage, is the state-of-the-art approach to formalize privacy guarantees by mathematically ensuring that an output of a given query probabilistically does not alter irrespective of whether a specific record is contained in it or not.

\begin{definition}[Differential privacy~\cite{DworkDP1,DworkDP2}]\label{def:DP}
A mechanism $\mathcal{M}$ is $(\varepsilon,
\delta)$-differentially private if for all adjacent
databases~\footnote{Databases are said to be adjacent or neighbors when they differ in one record.} $D,\,D'$
and for every measurable $S\,\subseteq\,\operatorname{Range}(\mathcal{M})$
holds that:
\begin{equation} \label{eq:cdp}
  \mathbb{P}\left[\mathcal{M}(D)\,\in\,S\,\right] \leq
  e^{\varepsilon}\mathbb{P}\left[\mathcal{M}(D')\,\in\,S\right] + \delta
\end{equation}
\end{definition}

To mitigate the major drawback of central model of DP that requires a trusted central dependancy from the server, a local variant of the central model has been studied recently by the community and termed as local differential privacy (LDP)~\cite{DuchiLDP}, where the users locally obfuscate their data and send the noisy data to the server such that a particular entry of a user's data probabilistically does not have an impact on the outcome of the query. 

\begin{definition}[Local differential privacy~\cite{DuchiLDP}]
\label{def:ldp}
Let $\mathcal{X}$ and $\mathcal{Y}$ denote the spaces of the original and the perturbed noisy data, respectively~\footnote{Usually in LDP $\mathcal{X}$ and $\mathcal{Y}$ are discrete domains but for the sake of uniformity with the other definitions we extend LDP to continuous domains.}. A mechanism $\mathcal{M}$ provides \emph{$(\varepsilon,\delta)$-local differential privacy} if, for all $x,\,x'\,\in\,\mathcal{X}$, and all measurable $S\,\subseteq\,\mathcal{Y}$, we have:

\begin{equation}\label{eq:ldp}
\mathbb{P}\left[\mathcal{M}(x) \in S\right] \leq e^{\varepsilon}\mathbb{P}\left[\mathcal{M}(x') \in S\right]+\delta
\end{equation}

\end{definition}

The local model for differential privacy~\cite{what-can-we} can be derived from \eqref{eq:cdp} when $x,\,x'$ are taken to be datasets of only one record. Therefore LDP is a stronger condition as it requires the mechanism to satisfy DP for any two values of the domain of
data $\mathcal{X}$. 

There are different approaches studied in the literature that apply DP in machine learning~\cite{shokri, abadi, dprnn}, but, possibly, one of the most successful lines of work is based on evaluating how much each user, participating in the training dataset, has contributed to the trained model. Essentially, gradient-based optimization of a machine learning model, parametrized by $\theta$, works by computing the gradient of a loss
function $\nabla f(\theta, z)$ with respect to $\theta$, for a number of iterations, evaluated over a batch of $z$, and updating the parameters according to the (stochastic) gradient descent algorithm~\cite{bottou2012stochastic}. If $\norm{\nabla f(\theta, z)}_2$ is clipped to a value $g_{\max}$, then the function querying the dataset has bounded sensitivity and, thus, the Gaussian mechanism with the properties described in \cite{abadi} can be applied to sanitize the queries to a user's data point $z$.

In the context of FL, the procedure described in
\cite{dprnn, adaptive} requires the clients to perform a few iterations of gradient descent over their local datasets $Z_c$ and only report the difference in the parameter vector before and after the update, clipped in norm to a value $g_{\max}$, to the
central server. The server then applies the Gaussian
mechanism to compute sanitized average updates to the model parameters, thus preserving DP with a preferred privacy level.

\subsection{$d$-privacy}\label{sec:dprivacy}
$d$-privacy~\cite{broadening} is a generalization of DP for any domain $\mathcal{X}$, representing the space of original data, endowed with a distance measure $d\colon \mathcal{X}^2\mapsto \mathbb{R}_{\geq 0}$, and any space of secrets $\mathcal{Y}$. A random mechanism $\mathcal{R}: \mathcal{X} \mapsto \mathcal{Y}$ is called $\varepsilon$ $d$-private if for all $x_1,\,x_2\in\,\mathcal{X}$ and measurable $S\,\subseteq \,\mathcal{Y}$: 
\begin{equation}\label{eq:dprivacy}
  \mathbb{P}\left[\mathcal{R}(x_1) \in S\right] \leq 
  e^{\varepsilon d(x_1,x_2)} \mathbb{P}\left[\mathcal{R}(x_2) \in S\right]
\end{equation}

Note that when $x_1, x_2$ are elements of the domain of databases, and $d$ is the distance on the Hamming graph of their adjacency relation, then \eqref{eq:cdp} and \eqref{eq:dprivacy} are equivalent, reducing the applicability of $d$-privacy to that of DP. It is also worthy to note that, in general, $\mathcal{X}$ and $\mathcal{Y}$ may be different. However, in the context of this work we have the space of original data and the space of secrets to be the same, i.e, $\mathcal{X}=\mathcal{Y}$.

This notion of distance metric-based privacy has been found particularly effective in the context of location privacy~\cite{broadening,geo}, where $\mathcal{X}=\mathbb{R}^2$ and $d$ is the Euclidean distance. The authors show how the formal privacy guarantees degrade gracefully with the distance between two points, which is especially beneficial when the service provider or the server is interested in an approximate value of the true location of the users, thus striking a balance between the privacy level required by the users and the statistical accuracy of their reported values.

This approach differs from that of DP, preferable only when an aggregated information is required. To sanitize the values in $\mathcal{X}$, \cite{broadening} introduces a
generalized Laplace mechanism, although an analytical form of the probability distribution for or the sampling procedure from a domain in $\mathbb{R}^n$, for $n>2$, has not been presented. It is worth noting that the clients may decide the standard deviation of the noise they choose to inject to their real data based on a radius within which they want to be indistinguishable. For instance, providing a sanitized location with a noise of standard deviation in the order of $1$ km may be sufficient for a user
to report her rough location to query for suggestions on nearby restaurants to a service provider, and at the same time concealing her exact coordinates.

\section{Related works}\label{section:Related}

With the generalized Federated Averaging algorithm~\cite{fed-l-1,gen-fed-avg} to solve the empirical risk minimization problem in Equation \eqref{erm:5}, an aggregated global model is optimized iteratively by a series of communications between a central server and a subset of clients where the local datasets reside. In each round, the server communicates the current state of the global model and the participating clients run a number of local optimization steps before communicating back to the server the updated model or the differential update. This approach has shown to be under-performing when the local datasets are samples of non-congruent distributions, failing to minimize both the local
and global objectives at the same time.

The need for personalized federated learning, therefore, emerged as a means to address this issue, with many different techniques being proposed. In \cite{mansour}, the authors suggest three methods for
personalization based on clustering, model interpolation, and data
interpolation. The idea of hypothesis-based clustering is also studied in \cite{ghosh}, which further provides convergence guarantees of the population loss function. Clustering participating clients to give rise to a personalized model is also the approach taken in \cite{sattler}, which goes on to introduce a meta-algorithm to determine whether the clients belong to non-congruent distributions, whether the federated optimization has reached minimums of both the clients and server objectives, and a  method for clustering based on cosine similarity of the updates.

In the works introduced above, the claims of privacy protection derive from the local raw data of the clients not being disclosed throughout the communication rounds between the server and the clients. As discussed in \cite{foundations-of-dp}, disclosing any answer to a deterministic query can release private information and relying on the ``release of summary statistics'' argument (i.e. releasing only model updates instead of releasing clients' raw data) can have dramatic effects on privacy of individuals.

To confront this issue, a number of works have focused on the privatization of the (federated) optimization algorithm under the framework of DP~\cite{abadi, flcdp, dprnn, adaptive}, thus
providing formal guarantees that the learned model will not depend too much on the presence or absence of a particular user's record in the dataset used in the federated optimization. The model of the attacker is, thus, reduced to an honest but curious adversary who only has access to the trained model \cite{abadi, adaptive, dprnn}. However, in this setting, no protection is ensured against the server and any possible man-in-the-middle attacker between the clients and the server who might access the clients' updates. This has been shown to be problematic as a malicious adversary with only access to the model updates sent by the clients has enough information to reconstruct samples from the local datasets~\cite{deep-leakage}. In \cite{secure-aggregation}, the authors addressed this concern of communicating non-privatized updates to a central server by introducing a cryptographically secure aggregation protocol for the central server to compute the updated global model state from the encrypted client's updates, but at the cost of increased communication and computation requirements for both the clients and the server.

Since various kinds of communication constraints form some of the most defining characteristics of the FL setting, other works examined, instead, the use of local differential privacy mechanisms
for protection against any strong adversary that may have access to the clients' updates \cite{ldpfl, zhao2020local}. One such example is \cite{ldpfl} which obfuscates each parameter within a certain adaptively-defined range of values and adopts a parameter shuffling mechanism to amplify the privacy guarantees being motivated by the shuffle model of DP~\cite{bittau2017prochlo}, which has been extensively studied of late in the literature~\cite{sommer2019privacy,cheu2019distributed,cheu2021differentially,erlingsson2019amplification,erlingsson2020encode,balle2019privacy,balle2020private,meehan2021shuffling,koskela2021tightdiscrete,kairouz2016discrete,koskela2021tight,feldman2020hiding}. It must be noted that the mechanism in 
\cite{ldpfl} requires each parameter of the local model to be uploaded to the server one at a time, which can drastically increase the wall-clock convergence time of the algorithm when used to train modern machine learning models which easily require millions of parameters. 

In \cite{shuffled} and \cite{erlingsson2020encode} the authors adopt the framework of local differential privacy and exploit shuffling, subsampling and other techniques to amplify the guarantees in terms of central differential privacy. Notably, these techniques still rely on a trusted aggregator. 
Work \cite{cpsgd} examines quantization techniques used for improving communication efficiency to establish local differential privacy guarantees against an untrusted or negligent aggregator. Relatively to the works just mentioned, we highlight how the use of local differential privacy with non-trivial guarantees would be problematic with personalization, as, by definition, client updates belonging to the bounded domain of diameter $2\cdot g_{\max}$ should be indistinguishable up to a small multiplicative factor. In \cite{huetal} the authors address the problem of personalized and locally differentially private federated learning, but for the simple case of convex, $1$-Lipschitz cost functions of the inputs. Note that this assumption is unrealistic in most machine learning model, and exclude many statistical modeling techniques, notably neural networks. Conversely, we do not make these assumptions. 

In Table \ref{tab:comp} is provided a qualitative comparison of this effort compared with the most relevant prior work on the subject, in order to provide context of the problem and hand and its proposed solution.

To the best of our knowledge, our paper is the first work trying to optimize over the two dimensions of indistinguishability and personalization in the context of the federated learning.

\begin{table}[]
\centering
\begin{tabular}{ccccc}
\multicolumn{1}{l}{} & \multicolumn{1}{l}{\cite{dprnn}} & \multicolumn{1}{l}{\cite{huetal}} & \multicolumn{1}{l}{\cite{ldpfl}} & \multicolumn{1}{l}{This Work} \\
\midrule
Central Privacy & \checkmark & \checkmark & \checkmark & \checkmark \\
\hline
Local Privacy   & $\times$ & \checkmark & \checkmark & \checkmark \\
\hline
Personalization & $\times$ & \checkmark & $\times$ & \checkmark \\
\hline
Mild Assumptions on Training & \checkmark & $\times$ & \checkmark & \checkmark \\
\hline
\end{tabular}
\caption{Qualitative comparison with the most relevant prior research on the topic. More details provided in Section \ref{section:Related}.}
\label{tab:comp}
\end{table}

\section{An algorithm for private and personalized federated learning}
\label{section:Algorithm}

The following section introduces our proposed algorithm for federated learning with local guarantees to provide group privacy (Algorithm~\ref{alg:pifca}). Locality refers to the sanitization of the information released by the client to the server, whereas group privacy refers to indistinguishability with respect to a neighborhood of
clients defined with respect to a certain distance metric. Algorithm~\ref{alg:pifca} is motivated from the Iterative Federated Clustering Algorithm (IFCA)~\cite{ghosh} and builds on top of it to provide formal privacy guarantees. The main differences lie in the introduction of the \texttt{SanitizeUpdate} function described in Algorithm~\ref{alg:sanitize} and $k$-means for server-side clustering of the updated models.

The optimization strategy adopted here for personalization of the federated models is discussed in the works of \cite{ghosh} and \cite{mansour} which converge to proposing similar algorithms independently. In summary, the intuition is to initialize a set of hypotheses for the parameter vectors, one for each potential cluster. In the $t^{\text{th}}$ iteration, a subset of users receives the hypotheses, following which, each participating user determines which one of the them to optimize by evaluating which parameter vector yields the lowest cost over the local dataset. The assumption is that users with similar data distributions will adopt the same hypothesis. The updated models are then privatized before being returned to the server for averaging. The server is now tasked with deciding which models belong to the same cluster, in order to aggregate the corresponding parameter vectors. To do so, it performs $k$-means clustering starting from a specific choice of
centroids, providing fast convergence. Estimating the clusters is effective under the assumption that the sanitized update to the model parameters $\hat{\delta}^{(t)}_c$ is relatively smaller than the difference between hypotheses at time $t$. With the notation described in Equations \eqref{erm:1} through \eqref{erm:5} and adopted in Algorithm \ref{alg:pifca}, it means that $\forall\,j,i\,\in\,\left[ k \right],\,j = \bar{j},\,j\neq i,\;\forall\,c\,\in C^{(t)}$:
\begin{equation}
  \label{condition1}
  \hat{\delta}^{(t)}_c \defeq \norm{\hat{\theta}_{\bar{j},c}^{(t)} - \theta_{j}^{(t)}}_2 \ll
  \norm{\theta_{i}^{(t)} - \theta_{j}^{(t)}}_2
\end{equation}

It is possible to see experimentally that these assumptions are mild and typically verified with machine learning models with a small number of parameters and a careful tuning of the Laplacian
noise, although the optimal hypotheses depend of course on the (unknown) data
distributions.

To introduce privacy guarantees in Algorithm~\ref{alg:pifca}, we deviate from the standard implementation of IFCA~\cite{ghosh,mansour} in the following ways:
\begin{enumerate}
  \item We expect all the information leaving the users to be obfuscated locally before reaching the server.
  \item Information about the number of samples a user trained the model on is not disclosed at all.
  \item Users do not communicate the cluster membership to the server. This would be yet another information to sanitize, and we opt instead for letting the server evaluate membership based on the already privatized parameter vectors.
  \item It follows that users cannot communicate $\hat{\delta}^{(t)}_c$ but the full sanitized and updated parameter vector $\hat{\theta}_{\bar{j},c}^{(t)}$. In other words, Algorithm~\ref{alg:pifca} cannot rely on gradient averaging~\cite{ghosh} and resorts to model averaging.
\end{enumerate}

\begin{algorithm}
  \caption{An algorithm for personalized federated learning with formal privacy guarantees in local neighborhoods.}
  \label{alg:pifca}
  \begin{algorithmic}[1]
  \Require number of clusters $k$; initial hypotheses $\theta_j^{(0)},
  j \in \left[ k \right]$; number of rounds $T$; number of users per
  round $U$; number of local epochs $E$; local step size $s$;
  user batch size $B_s$; noise multiplier $\nu$; local dataset $Z_c$ held by user $c$.
  \For{$t = \left\{ 0, 1, \dots, T-1 \right\}$} \Comment{Server-side loop}
  \State $C^{(t)} \gets$ SampleUserSubset($U$)
  \State BroadcastParameterVectors($C^{(t)}$; $\theta_j^{(t)}, j \in \left[k
  \right]$)
    \For{$c \in C^{(t)}$} in parallel \Comment{Client-side loop}
    \State $\bar{j} = \argmin_{j \in \left[ k \right]}F_c(\theta_j^{(t)};
    Z_c)$
    \State $\theta_{\bar{j}, c}^{(t)} \gets$ LocalUpdate($\theta_{\bar{j}}^{(t)};
      s; E; Z_c$)
      \State $\hat{\theta}_{\bar{j}, c}^{(t)} \gets$
      SanitizeUpdate($\theta_{\bar{j}, c}^{(t)}$; $\nu$)
    \EndFor
    \State $\left\{S_1, \dots, S_k\right\} = \text{k-means}$($\hat{\theta}_{\bar{j},c}^{(t)}$,
      $c \in C^{(t)}$; $\theta_j^{(t)}, j \in \left[k \right]$)
    \State $\theta_j^{(t+1)} \gets \frac{1}{|S_j|}\sum_{c \in S_j}
    \hat{\theta}_{\bar{j}, c}^{(t)}, \quad \forall j \in \left[ k \right]$
  \EndFor
\end{algorithmic}
\end{algorithm}

\begin{algorithm}
  \caption{SanitizeUpdate obfuscates a vector $\theta \in \mathbb{R}^n$,
  with a Laplacian noise tuned on the radius of a certain neighborhood and centered
  in $0$.}
  \label{alg:sanitize}
  \begin{algorithmic}[1]
    \Function{SanitizeUpdate}{$\theta_{\bar{j}}^{(t)}; \theta_{\bar{j}, c}^{(t)}; \nu$}
    \State $\delta_c^{(t)} = \theta_{\bar{j}, c}^{(t)} - \theta_{\bar{j}}^{(t)} $
    \State $\varepsilon = \frac{n}{\nu \Vert\delta_c^{(t)}\Vert}$
    \State Sample $\rho \sim \mathcal{L}_{0, \varepsilon}(x)$
    \State $\hat{\theta}_{\bar{j}, c}^{(t)}  = \theta_{\bar{j}, c}^{(t)} + \rho$
    \State \Return $\hat{\theta}_{\bar{j}, c}^{(t)}$
    \EndFunction
  \end{algorithmic}
\end{algorithm}

\subsection{The Laplace mechanism under Euclidean distance in
$\mathbb{R}^n$}~\label{sec:laplace}
In Algorithm~\ref{alg:sanitize}, \texttt{SanitizeUpdate} requires a careful consideration as it is the main privacy preserving mechanism. 
All the server sees when a user communicates back is a parameter vector $\theta \in \mathbb{R}^n$. Without implementing the privacy mechanism, the true value from the user would be disclosed. Therefore, the following part of the section presents the motivation for and derivation of a particular flavor of the Laplace mechanism, and the heuristic used in \texttt{SanitizeUpdate} to define the neighborhood of a client.

\subsubsection{Motivation}
From the literature on geo-indistinguishability~\cite{geo}, we extend the Laplace mechanism with Euclidean distance for any metric space $\mathbb{R}^n$ as described in Lemma~\ref{lemma_laplace}. Note that there is no univocal definition of the multivariate Laplace distribution, and many different results can be
considered generalizations of the univariate case. We resort to the Laplace mechanism under Euclidean distance because of the two following reasons:
\begin{enumerate}
    \item[i)] Clustering is performed on $\theta$ with the $k$-means algorithm under Euclidean distance. Since we define clusters or groups of users based on how close their model parameters are under $L_2$ norm, we are looking for a $d$-privacy mechanism that obfuscates the reported values within a certain group and allows the server to differentiate among users belonging to different clusters.
    \item[ii)] Consider an input-output relation of the kind $y = f(x, \theta)$ with $f$ differentiable with respect to $\theta$. Its parameter vector $\theta$ is to be estimated with Algorithm~\ref{alg:pifca}, 
    such that it minimizes the Root Mean Square Error (RMSE) cost function
    \begin{equation} \label{RMSE}
      F_c = \sqrt{\frac{\sum\limits_{i=1}^{|Z_c|} (y_i - f(x_i,
        \theta))^2}{|Z_c|}} = \frac{\norm{Y - f(X, \theta)}_2}{\sqrt{|Z_c|}}
    \end{equation}
     and $X = \left[ x_1, \dots, x_{|Z_c|}
    \right]^T$, $Y = \left[ y_1, \dots, y_{|Z_c|} \right]^T$, with $|Z_c|$ being the number 
    of data points held by client $c$. If a client releases to the server
    its parameters $\theta_c$ sanitized by addition of random vector
    $\Delta$, we can evaluate how the cost function would change with respect
    to the non-sanitized communication. Dropping the multiplicative constant
    we find:
    \begin{equation}
      \begin{split}
      \norm{Y - f(X, \theta_c)}_2 - \norm{Y - f(X, \theta_c + \Delta)}_2 &
      \leq \\
      \norm{Y - f(X, \theta_c) - Y + f(X, \theta_c + \Delta) }_2 & = \\
      \norm{f(X, \theta_c + \Delta) - f(X, \theta_c)}_2 & \approx \\
      \norm{f(X, \theta_c) + \nabla f(X, \theta_c)^T\Delta -f(x, \theta_c)}_2 & = \\
      \norm{\nabla f(X, \theta_c)^T\Delta}_2 & \leq \\
      \norm{\nabla f(X,\theta_c)}_2\norm{\Delta}_2
      \end{split}
    \end{equation}
    Hence, we notice how we can bound such value proportionally to the
    Euclidean norm of the random noise. Notably, it does not depend on the
    direction of $\Delta$. Thus, we require that points with the same bound
    on the increase of the cost function (which are all points distant
    $\norm{\Delta}_2$ from $\theta_c$) will be sampled with the same
    probability.
\end{enumerate}

\subsubsection{Derivation}
\begin{lemma}\label{lemma_laplace} Let $\mathcal{L}_{\varepsilon}:
  \mathbb{R}^n \rightarrow
  \mathbb{R}^n$ be the Laplace mechanism of the form
  $\mathcal{L}_{x_0,\varepsilon}(x) = \mathbb{P}\left[\mathcal{L}_{\varepsilon}(x_0)=x\right] = Ke^{-\varepsilon d(x,x_0)}$ with
$d(x,x_0)=\norm{x-x_0}_2$. The mechanism is $\varepsilon$ $d$-private and
\begin{equation}
  K = \frac{\varepsilon^n\Gamma(\frac{n}{2})}{2 \pi^{\frac{n}{2}}\Gamma(n)}
  \label{K}
\end{equation}
\end{lemma}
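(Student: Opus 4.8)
The plan is to treat the two assertions separately: first the $\varepsilon$-$d$-privacy inequality, which comes from a pointwise bound on the ratio of densities, and then the value of the normalizing constant $K$, which reduces to a one-dimensional integral after passing to spherical coordinates.

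For the privacy claim, I would note that, by construction, $\mathcal{L}_{\varepsilon}(x_0)$ has density $\mathcal{L}_{x_0,\varepsilon}(x) = K e^{-\varepsilon\norm{x - x_0}_2}$ with respect to the Lebesgue measure on $\mathbb{R}^n$. For any two inputs $x_1, x_2 \in \mathbb{R}^n$ and any outcome $y \in \mathbb{R}^n$, the ratio of densities equals $e^{\varepsilon(\norm{y - x_2}_2 - \norm{y - x_1}_2)}$, and the reverse triangle inequality gives $\norm{y - x_2}_2 - \norm{y - x_1}_2 \le \norm{x_1 - x_2}_2 = d(x_1,x_2)$. Hence $\mathcal{L}_{x_1,\varepsilon}(y) \le e^{\varepsilon d(x_1,x_2)}\,\mathcal{L}_{x_2,\varepsilon}(y)$ for every $y$, and integrating this inequality over an arbitrary measurable $S \subseteq \mathbb{R}^n$ yields \eqref{eq:dprivacy}. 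Note this argument only uses that the same constant $K$ is attached to every center, not its explicit value, so it goes through regardless of $K$.

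For the constant, I would impose the normalization $\int_{\mathbb{R}^n}\mathcal{L}_{x_0,\varepsilon}(x)\,dx = 1$; by translation invariance of Lebesgue measure we may take $x_0 = 0$, so $K^{-1} = \int_{\mathbb{R}^n} e^{-\varepsilon\norm{x}_2}\,dx$. Since the integrand is radial, switching to spherical coordinates factors the integral as $\int_0^{\infty} e^{-\varepsilon r}\,\abs{\mathbb{S}_n(r)}\,dr$, where $\abs{\mathbb{S}_n(r)} = \frac{2\pi^{n/2}}{\Gamma(n/2)}\,r^{n-1}$ is the surface area of the sphere of radius $r$ in $\mathbb{R}^n$. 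The remaining integral $\int_0^{\infty} r^{n-1} e^{-\varepsilon r}\,dr$ is the standard Gamma integral equal to $\Gamma(n)/\varepsilon^n$ (equivalently, it is the normalization identity for the $\gamma_{\varepsilon,n}$ density). Combining, $K^{-1} = \frac{2\pi^{n/2}}{\Gamma(n/2)}\cdot\frac{\Gamma(n)}{\varepsilon^n}$, which is exactly \eqref{K}.

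I do not expect a genuine obstacle here; the only points needing care are (a) the reduction of the $\mathbb{R}^n$ integral of a radial function to the radial integral weighted by the sphere-surface area (the co-area formula / change to spherical coordinates), and (b) quoting the correct surface-area formula $\abs{\mathbb{S}_n(1)} = 2\pi^{n/2}/\Gamma(n/2)$; I would cite both as standard. As a consistency check, $n = 1$ gives $K = \varepsilon/2$, the classical univariate Laplace density, and $n = 2$ gives $K = \varepsilon^2/(2\pi)$, matching the planar mechanism used in geo-indistinguishability, which confirms the formula.
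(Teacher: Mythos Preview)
Your proposal is correct and follows essentially the same route as the paper: the triangle inequality to bound the density ratio for the $d$-privacy claim, and the change to spherical coordinates together with the surface-area formula $\mathbb{S}_n(1)=2\pi^{n/2}/\Gamma(n/2)$ and the Gamma integral $\int_0^\infty r^{n-1}e^{-\varepsilon r}\,dr=\Gamma(n)/\varepsilon^n$ for the normalizing constant. The only cosmetic differences are that the paper treats the constant before the privacy inequality and omits your sanity checks for $n=1,2$.
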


\begin{proof} If $\mathcal{L}_{x_0,\varepsilon}(x)=Ke^{-\varepsilon d(x,x_0)}$ is a
probability density function of a point in $\mathbb{R}^n$ then there is a
$K$ such that $\int_{\mathbb{R}^n}\mathcal{L}_{x_0}(x)dx = 1$. We note that it depends
only on the distance from $x_0$ and we can write $Ke^{-\varepsilon d(x,x_0)} =
Ke^{-\varepsilon r}$ where $r$ is the radius of the ball in
$\mathbb{R}^n$ centered in $x_0$. Without loss of generality, let us now take
$x_0=0$. The probability density of the event $ x
\in \mathbb{S}_n(r) = \{x : \norm{x}_2 = r\}$ is then $p(x \in
\mathbb{S}_n(r)) = Ke^{-\varepsilon r} \mathbb{S}_n(1)r^{n-1}$ where
$S_n(1)$ is the surface of the unitary ball in $\mathbb{R}^n$ and
$S_n(r) = S_n(1)r^{n-1}$ is the surface of a generic ball of radius $r$.
Given that
\begin{equation}
  S_n(1) = \frac{2 \pi^{n/2}}{\Gamma(\frac{n}{2})}
\end{equation}
solving
\begin{equation}
\begin{split}
  \int_0^{+\infty} \mathbb{P}\left[x \in \mathbb{S}_n(r)\right] dr
  &= \int_0^{+\infty} Ke^{-\varepsilon r} S_n(1)r^{n-1}dr = \\
  &= K\frac{2 \pi^{n/2}\Gamma(n)}{\varepsilon^n\Gamma(\frac{n}{2})} =
  1
\end{split}
\end{equation}
results in
\begin{equation}
  K = \frac{\varepsilon^n\Gamma(\frac{n}{2})}{2 \pi^{\frac{n}{2}}\Gamma(n)}
\end{equation}
where $\Gamma(\cdot)$ denotes the gamma function.
By plugging $\mathcal{L}_{x_0,\varepsilon}(x)=Ke^{-\varepsilon d(x,x_0)}$ in Equation
\ref{eq:dprivacy}:
\begin{equation}
  Ke^{-\varepsilon d(x,x_1)} \leq e^{\varepsilon d(x_1,x_2)}Ke^{-\varepsilon
  d(x,x_2)}
\end{equation}
\begin{equation}
  e^{\varepsilon (\norm{x-x_2}_2- \norm{x-x_1}_2)} \leq
  e^{\varepsilon\norm{x_1-x_2}} =
  e^{\varepsilon d(x_1,x_2)}
\end{equation}
\end{proof}

One of the biggest advantages of $d$-privacy is that the level of privacy can be derived for a repeated number of independent queries due to the the fact that it satisfies the compositionality theorem~\cite{DworkDP_Compositionality}, which is one of the key properties for the applicability of DP and its variants for formalizing the privacy guarantee for a composition of independent queries.

\begin{restatable}{theorem}{compositionality}\label{th:compositionality}[Compositionality Theorem for $d$-privacy]
Let $\mathcal{K}_i$ be $(\varepsilon_i)$-$d$-private mechanism for $i\in \{1,2\}$. Then their independent composition is $(\varepsilon_1+\varepsilon_2)$-$d$-private, i.e., for every $S_1,\,S_2\subseteq \mathcal{Y}$ and all $x_1,\,x'_1,\,x_2,\,x'_2\,\in\,\mathcal{X}$, we have:
\begin{align}\label{eq:composition}
    \mathbb{P}_{\mathcal{K}_1,\mathcal{K}_2}\left[(y_1,y_2)\,\in\, S_1\times S_2\vert (x_1,x_2)\right]\nonumber\\
    \leq e^{\varepsilon_1\,d(x_1,x'_1)+\varepsilon_2\,d(x_2,x'_2)}\mathbb{P}_{\mathcal{K}_1,\mathcal{K}_2}\left[(y_1,y_2)\,\in\, S_1\times S_2\vert (x'_1,x'_2)\right]
\end{align}
\end{restatable}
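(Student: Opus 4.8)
The plan is to reduce the joint statement to the one-coordinate $d$-privacy guarantee of each $\mathcal{K}_i$, exploiting that the composition is \emph{independent}. First I would record the factorization that independence provides: for a measurable rectangle $S_1\times S_2$, the output law of the composed mechanism on input $(x_1,x_2)$ is the product of the two individual output laws, i.e.
\[
\mathbb{P}_{\mathcal{K}_1,\mathcal{K}_2}\!\left[(y_1,y_2)\in S_1\times S_2 \mid (x_1,x_2)\right] = \mathbb{P}_{\mathcal{K}_1}\!\left[y_1\in S_1\mid x_1\right]\cdot\mathbb{P}_{\mathcal{K}_2}\!\left[y_2\in S_2\mid x_2\right].
\]

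Second, I would apply the defining inequality \eqref{eq:dprivacy} to each factor separately. Since $\mathcal{K}_1$ is $\varepsilon_1$-$d$-private, $\mathbb{P}_{\mathcal{K}_1}[y_1\in S_1\mid x_1]\le e^{\varepsilon_1 d(x_1,x_1')}\,\mathbb{P}_{\mathcal{K}_1}[y_1\in S_1\mid x_1']$, and analogously $\mathbb{P}_{\mathcal{K}_2}[y_2\in S_2\mid x_2]\le e^{\varepsilon_2 d(x_2,x_2')}\,\mathbb{P}_{\mathcal{K}_2}[y_2\in S_2\mid x_2']$. Because all four probabilities are non-negative reals, I can multiply the two inequalities term by term, combine the exponential prefactors into $e^{\varepsilon_1 d(x_1,x_1')+\varepsilon_2 d(x_2,x_2')}$, and then invoke the independence factorization a second time — now on input $(x_1',x_2')$ — to recognize the surviving product as $\mathbb{P}_{\mathcal{K}_1,\mathcal{K}_2}[(y_1,y_2)\in S_1\times S_2\mid (x_1',x_2')]$. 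Read from end to end, this chain of (in)equalities is exactly \eqref{eq:composition}.

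I do not anticipate a genuine obstacle; the only point deserving a word of care is the measure-theoretic bookkeeping in the first step. The factorization holds verbatim on rectangles $S_1\times S_2$, which is all that \eqref{eq:composition} requires; if one wanted the inequality for an arbitrary measurable subset of $\mathcal{Y}\times\mathcal{Y}$ it would follow by a routine $\pi$-$\lambda$ (monotone-class) extension. I would also remark that nothing in the argument uses that $d$ is a metric — in particular the triangle inequality is never invoked — and that iterating the two-mechanism case by induction on $m$ immediately upgrades the result to the independent composition of any finite family $\mathcal{K}_1,\dots,\mathcal{K}_m$, which is then $\left(\sum_{i=1}^m\varepsilon_i\right)$-$d$-private.
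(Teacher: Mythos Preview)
Your proposal is correct and follows essentially the same approach as the paper: factor the joint probability via independence, apply the $\varepsilon_i$-$d$-privacy bound to each factor, multiply the two inequalities, and re-factor. Your additional remarks on the $\pi$--$\lambda$ extension, the irrelevance of the triangle inequality, and the inductive extension to $m$ mechanisms go slightly beyond what the paper records, but the core argument is identical.
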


\begin{proof}
Let us simplify the notation and denote: $$P_i=\mathbb{P}_{\mathcal{K}_i}\left[y_i\in S_i\vert x_i\right]$$ $$P'_i=\mathbb{P}_{\mathcal{K}_i}\left[y_i\in S_i\vert x'_i\right]$$
for $i\,\in\,\{1,2\}$.
As mechanisms $\mathcal{K}_1$ and $\mathcal{K}_2$ are applied independently, we have:
\begin{align}
    \mathbb{P}_{\mathcal{K}_1,\mathcal{K}_2}\left[(y_1,y_2)\,\in\, S_1\times S_2\vert (x_1,x_2)\right]=P_1.P_2 \nonumber\\
    \mathbb{P}_{\mathcal{K}_1,\mathcal{K}_2}\left[(y_1,y_2)\,\in\, S_1\times S_2\vert (x'_1,x'_2)\right]=P'_1.P'_2 \nonumber
\end{align}
Therefore, we obtain:
\begin{align}
     \mathbb{P}_{\mathcal{K}_1,\mathcal{K}_2}\left[(y_1,y_2)\,\in\, S_1\times S_2\vert (x_1,x_2)\right]=P_1.P_2\nonumber\\
     \leq\left(e^{\varepsilon_1\,d(x_1,x'_1)}P'_1\right)\left(e^{\varepsilon_2\,d(x_2,x'_2)}P'_2\right)\nonumber\\
     \leq e^{\varepsilon_1\,d(x_1,x'_1)+\varepsilon_2\,d(x_2,x'_2)}\mathbb{P}_{\mathcal{K}_1,\mathcal{K}_2}\left[(y_1,y_2)\,\in\,
     S_1\times S_2\vert (x'_1,x'_2)\right]\nonumber
\end{align}
\end{proof}

\subsubsection{A heuristic for defining the neighborhood of a client} \label{heuristic}
In the $t^{\text{th}}$ iteration, when a user $c$ calls the \texttt{SanitizeUpdate} routine in Algorithm~\ref{alg:sanitize}, it has already received a set of hypotheses, optimized $\theta_{\bar{j}}^{(t)}$ (the one that fits best its data distribution),
and got $\theta_{\bar{j}, c}^{(t)}$. It is reasonable to assume that clients whose 
datasets are sampled from the same underlying data distribution $\mathcal{D}_{\bar{j}}$
(as described in Section \ref{flp}) will perform an update similar to $\delta_c^{(t)}$.
\begin{definition}\label{def:neighbourhood}
For any model parametrized by $\theta\,\in\,\mathbb{R}^n$, we define its $r$-\emph{neighborhood} as the set of points in the parameter space which are at a $L_2$ distance of at most $r$ from $\theta$, i.e., $\{\phi \in \mathbb{R}^n \colon \left\Vert \theta,\phi\right\Vert_2\leq r\}$
\end{definition}

\begin{definition}\label{def:group}
Clients whose models are parametrized by $\theta \in \mathbb{R}^n$ in the same $r$-neighborhood are said to be in the same \emph{group}, or \emph{cluster}.
\end{definition}

Therefore, we require that points which are within the $\delta_c^{(t)}$-neighborhood of $\hat{\theta}_{\bar{j}, c}^{(t)}$ to be indistinguishable. To provide this guarantee, we tune the Laplace mechanism such that the points within the neighborhood are $\varepsilon \Vert \delta_c^{(t)}\Vert_2$ differentially private.
With the choice of $\varepsilon = n/(\nu \delta_c^{(t)})$, one finds that
$\varepsilon \Vert \delta_c^{(t)}\Vert_2 = n/\nu$, and we call $\nu$ the \emph{noise multiplier}. It is straightforward to observe that the larger the value of $\nu$ gets, the stronger is the privacy guarantee. Note that in order to derive this result, we exploited the fact that the norm of the noise vector sampled from  
Laplace distribution is distributed according to Equation \eqref{d_r} and its expected value is $\mathbb{E}\left[ \gamma_{\varepsilon, n}(r)\right] = n/\varepsilon$.

\subsection{Sampling from the Laplace mechanism} \label{section:Sampling}
Exploiting the radial symmetry of the Laplace distribution, we note that, in
order to sample a point $x_s \sim \mathcal{L}_0(x)$ in  $\mathbb{R}^n$, it is possible to first sample the set of points distant $d(x, 0)=r$ from $x_0=0$ and then sample
uniformly from the resulting hypersphere. Accordingly, the p.d.f. of the event $ x \in \mathbb{S}_n(r) = \{x : \norm{x}_2
= r\}$ is then $\mathbb{P}\left[x \in \mathbb{S}_n(r)\right] = Ke^{-\varepsilon r} \mathbb{S}_n(1)r^{n-1}$, where $K$ is as in Lemma~\ref{lemma_laplace} and $\mathbb{S}_n(r)$ is the surface of the sphere with radius $r$ in $\mathbb{R}^n$. Hence, we can write
\begin{equation}
  \gamma_{\varepsilon,n}(r) = \frac{\varepsilon^n e^{-\varepsilon r}
  r^{n-1}}{\Gamma(n)}
  \label{d_r}
\end{equation}
which is the gamma distribution with shape $n$ and scale $1/\varepsilon$. Drawing from $\gamma_{\varepsilon, n}(r)$ is implemented in multiple routines
in common programming languages. Equation \eqref{d_r} represents the p.d.f. of sampling the hypersphere of radius $\norm{x_s} = r \sim
\gamma_{\varepsilon, n}(r)$. To sample a point uniformly from the corresponding hypersphere one can sample ${}_{1}x_n \in \mathbb{S}_n(1)$, a point from the hypersphere of radius $1$, and have that $x_s = {}_{1}x_n\norm{x_s}$, where ${}_{1}x_n = \frac{x_n}{\norm{x_n}}$. This can be
done operationally by sampling $x_n$ from the $n$-dimensional vector whose components are sampled from a Gaussian
distribution centered at $0$ and with a variance $\sigma^2$, i.e., $x_n \sim \mathcal{G}_n(0, \sigma^2)$ and letting ${}_{1}x_n = \frac{x_n}{\norm{x_n}}$.

\subsection{Component-wise variance}
In order to better characterize the distribution in Lemma \ref{lemma_laplace}, we now proceed to show how to derive the variance of each single component $x_i$ of 
$x = \left[x_1, \dots, x_n\right]^T$.
\begin{lemma}
  Let $x \sim \mathcal{L}_{0, \varepsilon}$, $x \in \mathbb{R}^n$ as in Lemma
  \ref{lemma_laplace}
  and $r \sim \gamma_{\varepsilon, n}$ as in Equation \eqref{d_r}, then we have that the variance of
  the $i$-th component of $x$ is $\sigma_{x_i}^2 =
  \frac{n+1}{\varepsilon^2}$.
\end{lemma}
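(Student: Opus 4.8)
The plan is to exploit the radial symmetry of $\mathcal{L}_{0,\varepsilon}$ already used for the sampling procedure in Section~\ref{section:Sampling}, and to reduce the component-wise second moment to a one-dimensional integral against the gamma density $\gamma_{\varepsilon,n}$ of Equation~\eqref{d_r}.

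First I would observe that since $\mathcal{L}_{0,\varepsilon}(x) = Ke^{-\varepsilon\norm{x}_2}$ depends on $x$ only through $\norm{x}_2$, the distribution is invariant under coordinate sign flips and coordinate permutations; hence $\mathbb{E}[x_i] = 0$, so $\sigma_{x_i}^2 = \mathbb{E}[x_i^2]$, and moreover $\mathbb{E}[x_i^2]$ is the same for every $i$. Summing over $i$ gives $n\,\mathbb{E}[x_i^2] = \mathbb{E}\big[\sum_{i=1}^n x_i^2\big] = \mathbb{E}[\norm{x}_2^2] = \mathbb{E}[r^2]$, where $r = \norm{x}_2$ has density $\gamma_{\varepsilon,n}$ by the very computation leading to Equation~\eqref{d_r}. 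Therefore $\sigma_{x_i}^2 = \tfrac1n\,\mathbb{E}[r^2]$.

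It then remains to compute the second moment of $\gamma_{\varepsilon,n}$. Writing it out, $\mathbb{E}[r^2] = \int_0^{+\infty} r^2\,\frac{\varepsilon^n e^{-\varepsilon r} r^{n-1}}{\Gamma(n)}\,dr = \frac{\varepsilon^n}{\Gamma(n)}\int_0^{+\infty} r^{n+1}e^{-\varepsilon r}\,dr = \frac{\varepsilon^n}{\Gamma(n)}\cdot\frac{\Gamma(n+2)}{\varepsilon^{n+2}} = \frac{\Gamma(n+2)}{\varepsilon^2\Gamma(n)} = \frac{n(n+1)}{\varepsilon^2}$, using $\Gamma(n+2) = (n+1)\,n\,\Gamma(n)$. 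Dividing by $n$ yields $\sigma_{x_i}^2 = \frac{n+1}{\varepsilon^2}$, as claimed. Equivalently, one may invoke the decomposition $x = r\,{}_{1}x_n$ with $r$ and ${}_{1}x_n$ independent, so that $\mathbb{E}[x_i^2] = \mathbb{E}[r^2]\,\mathbb{E}[({}_{1}x_n)_i^2]$ with $\mathbb{E}[({}_{1}x_n)_i^2] = 1/n$ by the same symmetry argument on the sphere, reaching the identical conclusion.

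There is essentially no hard step here: the only points requiring care are the bookkeeping of the Gamma-function identities and recalling that $\gamma_{\varepsilon,n}$ has scale $1/\varepsilon$ (rate $\varepsilon$), so that its second moment is $n(n+1)/\varepsilon^2$ and not $n(n+1)\varepsilon^2$; an analogous caveat applies to the consistency check $\mathbb{E}[r] = n/\varepsilon$ already quoted in Section~\ref{heuristic}.
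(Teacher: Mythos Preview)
Your proof is correct and follows essentially the same approach as the paper: use the isotropy of $\mathcal{L}_{0,\varepsilon}$ to reduce $\sigma_{x_i}^2$ to $\tfrac{1}{n}\mathbb{E}[r^2]$ with $r\sim\gamma_{\varepsilon,n}$, then compute $\mathbb{E}[r^2]=n(n+1)/\varepsilon^2$. The only cosmetic difference is that the paper obtains this second moment via the moment generating function $M_r(t)=(1-t/\varepsilon)^{-n}$ rather than by your direct Gamma-integral, which is equivalent.
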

\begin{proof}
  With $r \sim \gamma_{\varepsilon, n}$ we have that, by construction,
  \begin{equation}
  \mathbb{E}\left[ r^2 \right] = \mathbb{E}\left[ \sum_{i=1}^n x_i^2\right]
  = n\mathbb{E}\left[ x_i^2 \right] = n\sigma_{x_i}^2
  \end{equation}
  With the last equality holding since $\mathcal{L}_{0,\varepsilon}$ is
  isotropic and centered in zero.
  Recalling that
  \begin{equation}
    \mathbb{E}\left[ r^2 \right] =
    \left. \frac{d^2}{dt^2} M_r(t) \right|_{t=0}
  \end{equation}
  with $M_r(t)$ the moment generating function of the gamma distribution
  $\gamma_{\varepsilon,n}$,
  \begin{equation}
    \begin{split}
      & \left.\frac{d^2}{dt^2} \left( \left( 1- \frac{t}{\varepsilon}\right)^{-n}
        \right)\right|_{t=0} = \nonumber\\
     = &\left.\frac{n(n+1)}{\varepsilon^2}\left( 1-
  \frac{t}{\varepsilon}\right)^{-(n+2)} \right|_{t=0} = \\
   = & \frac{n(n+1)}{\varepsilon^2}
  \end{split}
  \end{equation}
  which leads to
  \begin{equation}
  \sigma_{x_i}^2 = \frac{n+1}{\varepsilon^2}
  \end{equation}
\end{proof}

\subsection{Limitations of the Laplace mechanism in very high dimensional spaces}\label{sec:limitations}

As already described in Section \ref{sec:dprivacy} and \ref{sec:laplace}, $d$-privacy provides differential
privacy guarantees to a point $x_0 \in \mathcal{X}$, with privacy parameter at most $\varepsilon r$, with
respect to any point $x$, such that $d(x, x_0) \leq r$. 
These local differential/$d$-privacy guarantees for federated learning models are a desirable
feature which would make any information disclosure from the client to the server 
indistinguishable up to a certain multiplicative factor. Local DP mechanisms ensure also
central DP, and thus would provide its guarantees as well. However, LDP is notoriously
hard to achieve while maintaining utility of the queries. In \cite{ldp-bounds}
are evaluated the lower bounds of the error on the estimate of a counting query under both local and
central DP with the Laplace mechanism. They are found to be $O(1/ \varepsilon)$
and $\Omega(\sqrt{N}/ \varepsilon)$ respectively, which for the latter depend 
on the number of participating individuals $N$.
In the context of federated learning though, where individual information is aggregated 
e.g. by average, the Central Limit Theorem would yield a reduction of the standard 
deviation of the aggregate error by $\sqrt{N}$ in the local model. Instead, we want to highlight what
we consider to be the hardest obstacle in providing LDP guarantees in federated learning.

Assume that we  want to sanitize information locally with the Laplace mechanism defined 
in Lemma \ref{lemma_laplace}. With the results found in Section 
\ref{section:Sampling} we see that each point $x \in \mathbb{R}^n$ would be sanitized by addition of 
a vector $\rho$ whose norm is distributed as $\norm{\rho}_2 \sim \gamma_{\varepsilon,n}(r)$.
Its mean is found to be $\mathbb{E}\left[\gamma_{\varepsilon,n}(r)\right] =n/\varepsilon$, and we 
highlight the linear dependency on $n$. In large machine learning models where the number of
parameters easily reaches a few millions, this would completely destroy utility, as maintaining LDP with small $\varepsilon$ values would require noise levels that dwarf the true values of the parameters.
Indeed, in Section \ref{nn} we conduct  experiments on model architecture leading to $\theta \in \mathbb{R}^{1206590}$, and we can see that maintaining low levels of 
the LDP parameters  would destroy the model's accuracy. Conversely, maintaining high utility would yield huge values of LDP parameters, rendering formal LDP guarantee practically meaningless. However, in the case of machine learning, the typical white-box attack is the Deep Leakage from Gradients (DLG)~\cite{deep-leakage}. In our experiments, we have empirically verified that we can achieve a strong defense against this kind of attacker while maintaining a good level of accuracy.

\section{Experiments}
\label{section:Experiments}

\subsection{Synthetic data}\label{sec:synthetic}
The first experiment tests Algorithm~\ref{alg:pifca} on synthetic data generated from a
linear mapping with a set of 
predetermined optimal parameters. In particular, we generate data according to $k=2$ different
distributions
\begin{equation}
    y = x^T\theta_1^* + u; \quad u \sim \text{Uniform}\left[0, 1\right)
\end{equation}
\begin{equation}
    y = x^T\theta_2^* + u; \quad u \sim \text{Uniform}\left[0, 1\right)
\end{equation}
with $\theta_1^* = \left[+5, +6\right]^T, \theta_2^* = \left[+4, -4.5\right]^T$. A total of $100$ users
holds $10$ samples each, drawn from either one of the distributions. They participate in a 
training of two initial hypotheses which are sampled from a Gaussian distribution centered in $0$
and unit variance at iteration $t=0$. A total of $U=7$ users are asked to participate in the
optimization at each round and train locally the hypothesis that fits better their dataset for $E=1$
epochs each time. The noise multiplier is set to $\nu=5$.
Local step size $s = 0.1 $ and a batch size  $B_s = 10$ complete the required
inputs to the algorithm. To verify the training process, another set of users with the same
characteristics is held out form training to perform validation and stop the federated optimization
once the is no improvement in the loss function in Equation \eqref{RMSE} for $6$ consecutive rounds.
Results of the training process are shown in Figures \ref{fig1:31}, \ref{fig1:32}, \ref{fig1:33}. 
Note that the real clients parameters would not be visible to the server but are drawn on the plots
for
clarity. Although at first the updates seem to be distributed all over the domain, in just a few
rounds of training the process converges to values very close to the two optimal parameters.
With the heuristic presented in Section \ref{heuristic} it is easy to find that whenever a user
participates in an optimization round it incurs in a privacy leakage of at most $n/\nu = 2/5=0.4$,
in a differential private sense, with respect to points in its neighborhood. 
Using the result in Theorem \ref{th:compositionality} clients can compute the overall privacy leakage of the optimization
process, should they be required to participate multiple times. With the uniform sampling of the
clients (without replacement) that was used in this experiment, the maximum composed value of the
privacy leakage was $2.4$. For any user, whether to participate or not in a training round 
can be decided right before releasing the updated parameters, in case that would increase the
privacy leakage above a threshold value decided beforehand.

In a concise ablation study we assess how training progresses when two characteristic
features of Algorithm \ref{alg:pifca} are removed:
\begin{itemize}
    \item the privatization of the client parameters
    \item model personalization 
\end{itemize}
In Figure \ref{fig1:21}, \ref{fig1:22}, \ref{fig1:23} no sanitization is performed
on the updated parameters sent by the users and the optimization terminates with the 
clients very close to the optimal parameters. This is reflected in the validation loss
reaching the lowest value among the three cases. We highlight, though, how it is still in the same
order of magnitude as the sanitized case.

In Figure \ref{fig1:11}, \ref{fig1:12}, \ref{fig1:23} the clients are left to optimize the initial
hypotheses without personalization, and we find that the validation loss is considerably larger than both
the non-sanitized and sanitized case. This is evident also as the real client parameters transmitted
to the server converge to somewhere in between the optimal parameters.
Further, in Figure \ref{priv-leak-synth} is provided the increase in maximum value of privacy leakage clients incur into, per cluster.

\newcommand{\mywidth}{0.3}
\begin{figure*}
 \captionsetup[subfigure]{justification=centering}
 \centering
 \begin{subfigure}[b]{\mywidth\textwidth}
     \centering
     \includegraphics[width=\textwidth]{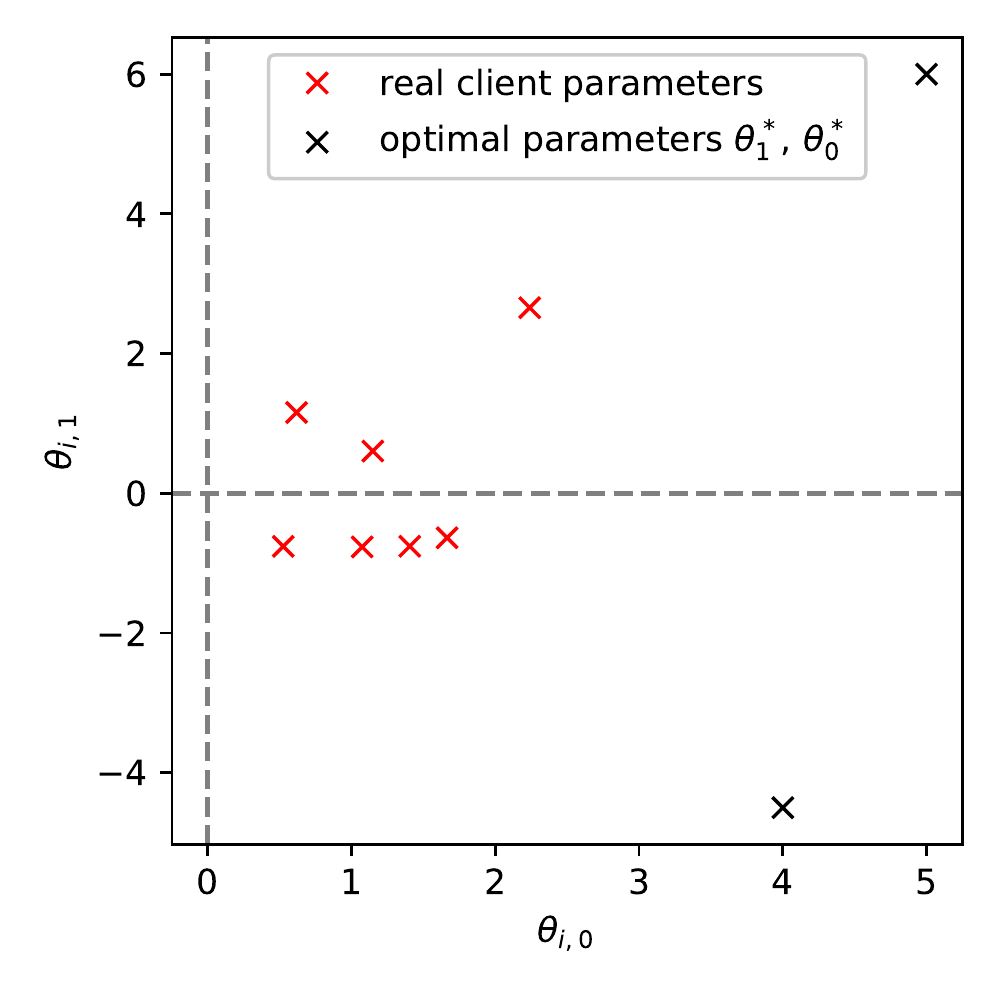}
     \caption{First round}
     \label{fig1:11}
 \end{subfigure}
 \hfill
 \begin{subfigure}[b]{\mywidth\textwidth}
     \centering
     \includegraphics[width=\textwidth]{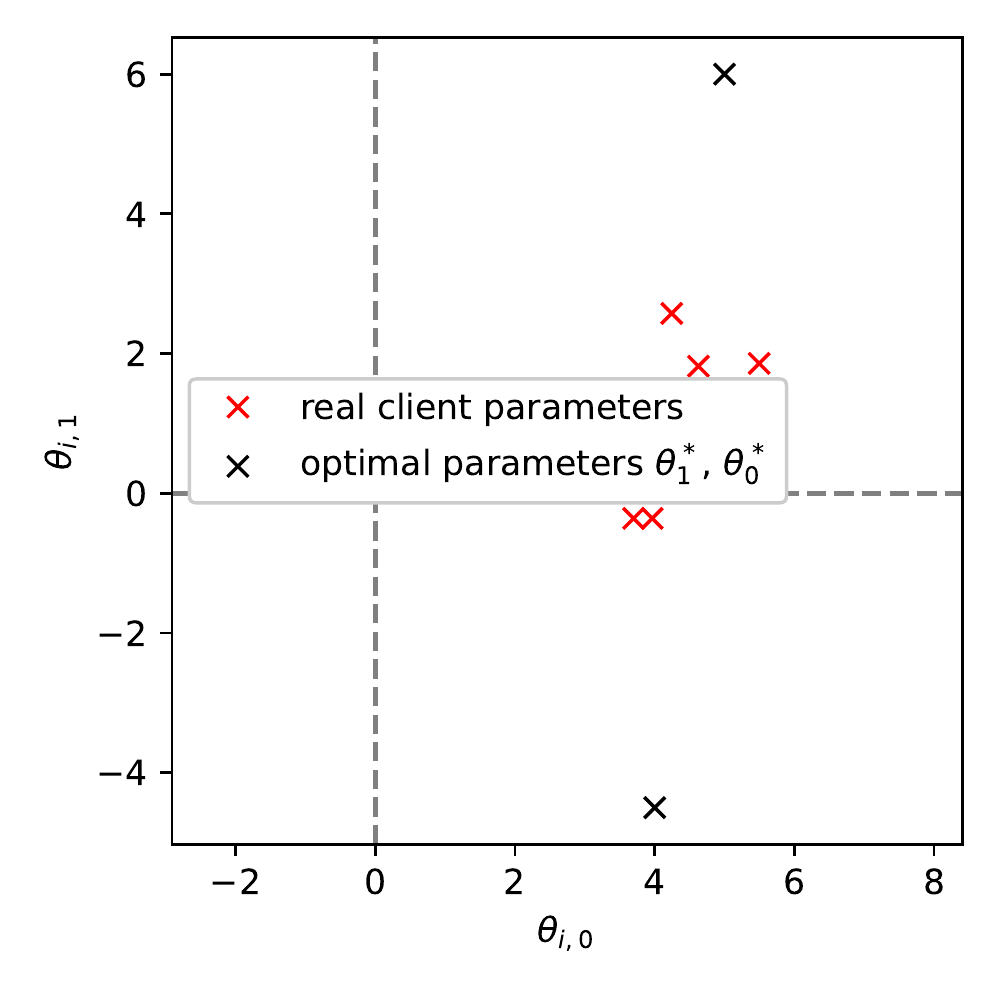}
     \caption{Best round}
     \label{fig1:12}
 \end{subfigure}
 \hfill
 \begin{subfigure}[b]{\mywidth\textwidth}
     \centering
     \includegraphics[width=\textwidth]{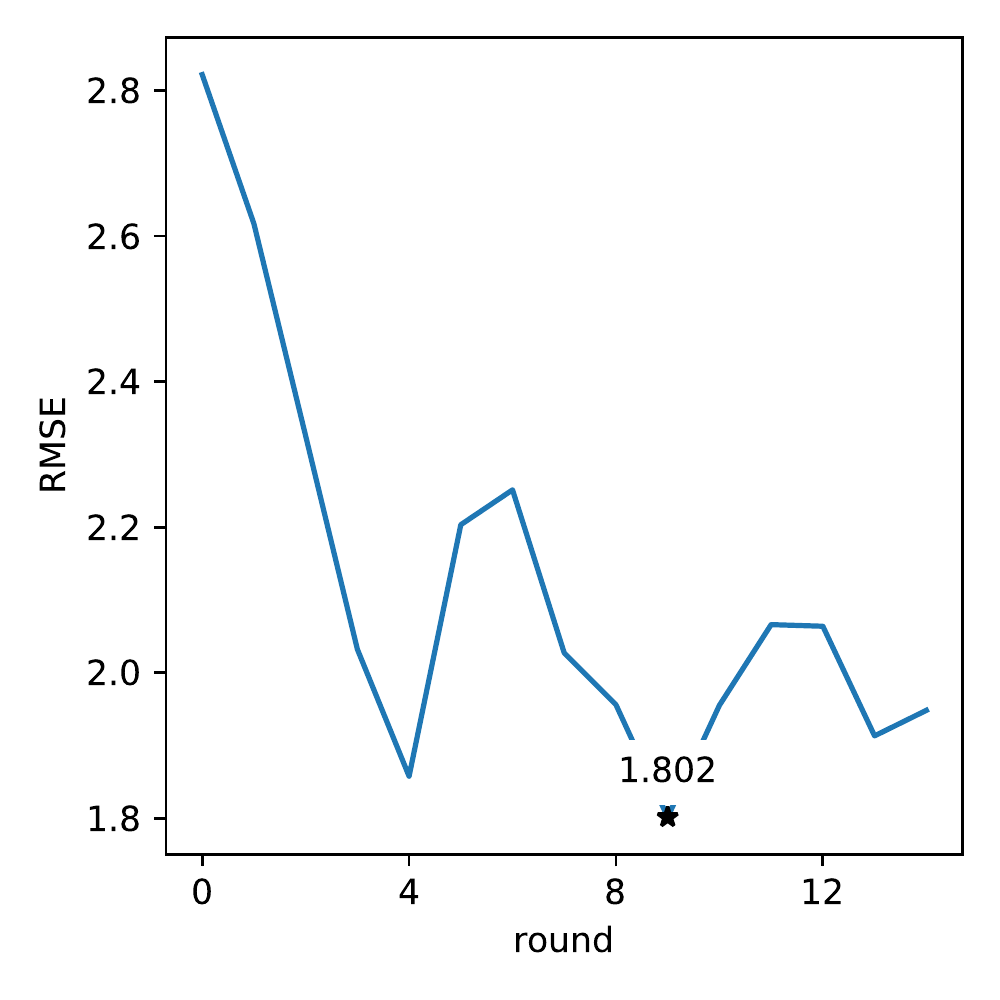}
     \caption{Validation loss}
     \label{fig1:13}
 \end{subfigure}

 \begin{subfigure}[b]{\mywidth\textwidth}
     \centering
     \includegraphics[width=\textwidth]{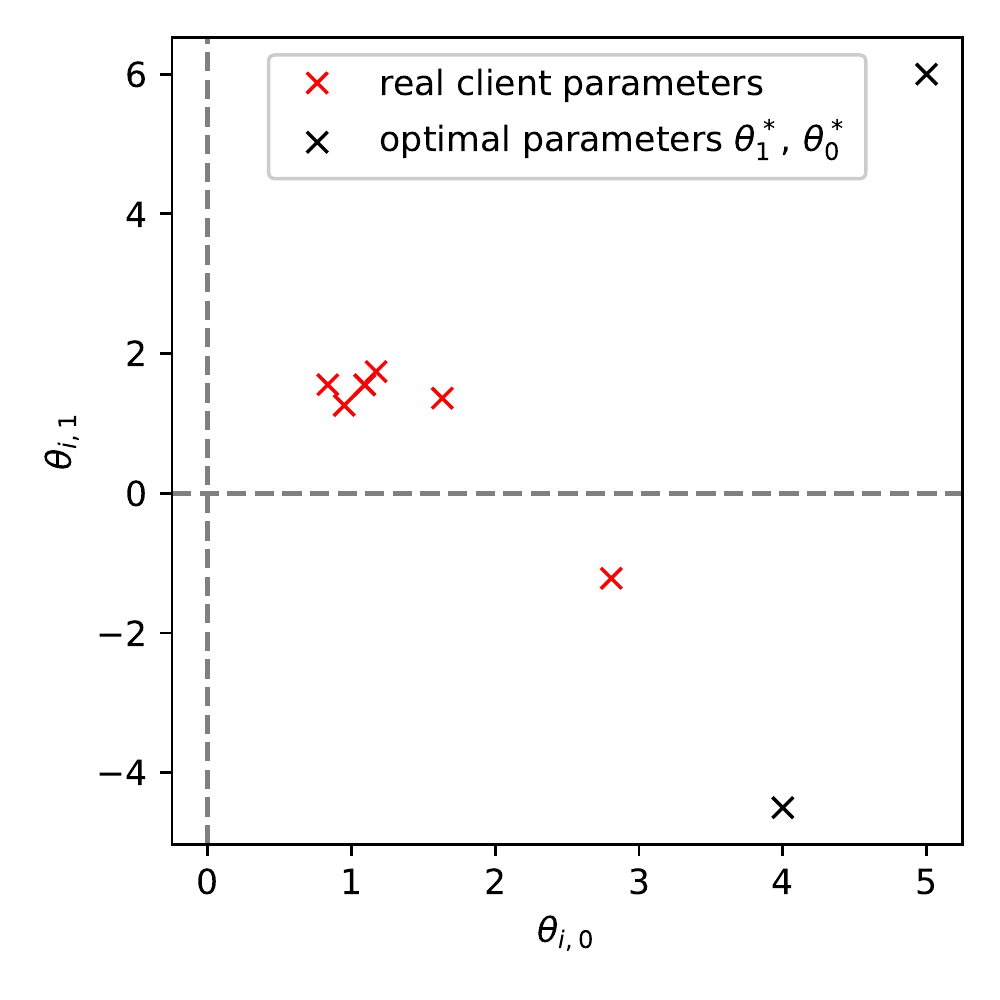}
     \caption{First round}
     \label{fig1:21}
 \end{subfigure}
 \hfill
 \begin{subfigure}[b]{\mywidth\textwidth}
     \centering
     \includegraphics[width=\textwidth]{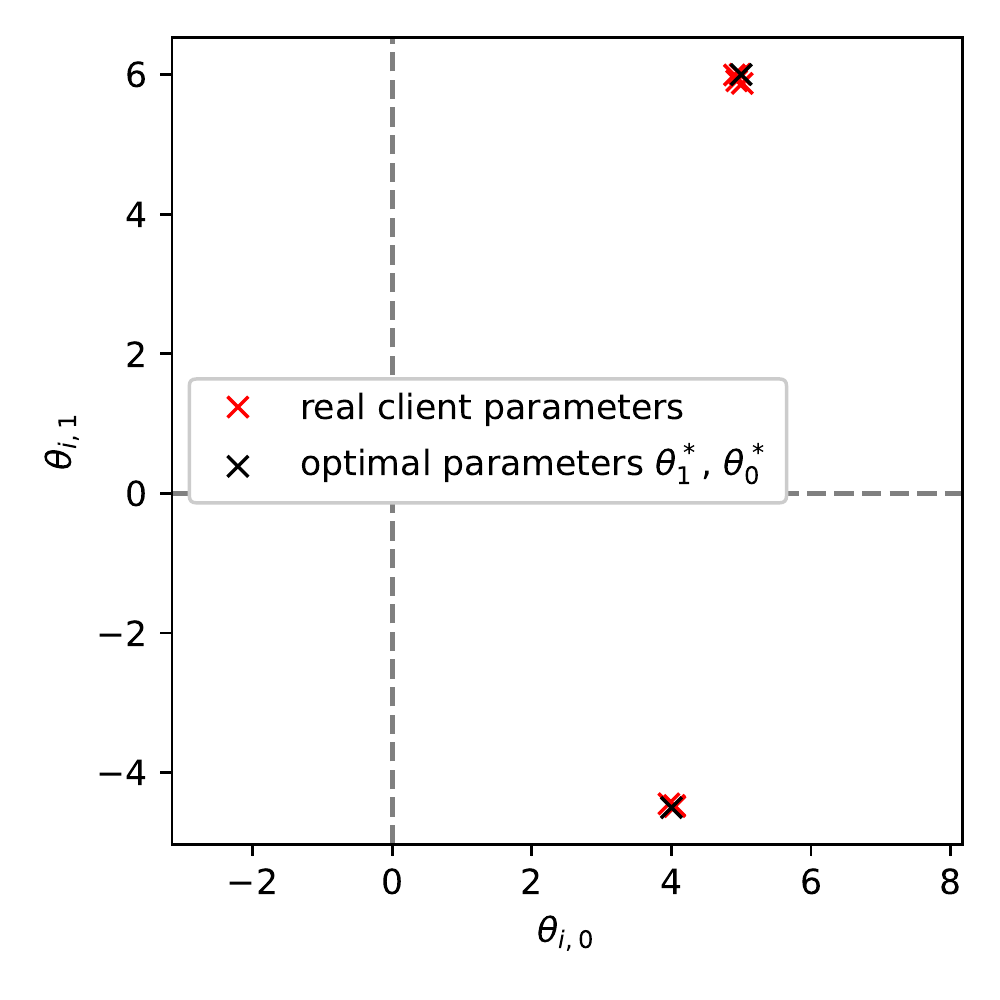}
     \caption{Best round}
     \label{fig1:22}
 \end{subfigure}
 \hfill
 \begin{subfigure}[b]{\mywidth\textwidth}
     \centering
     \includegraphics[width=\textwidth]{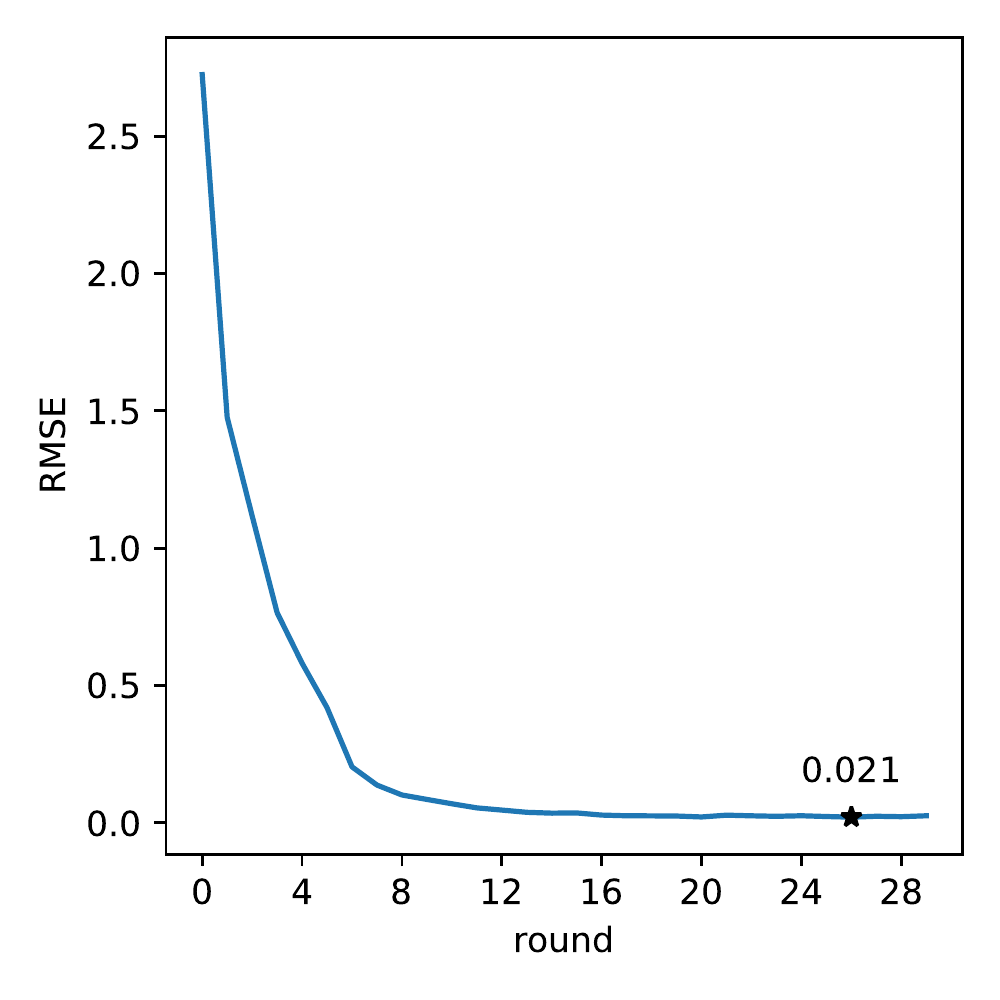}
     \caption{Validation loss}
     \label{fig1:23}
 \end{subfigure}

 \begin{subfigure}[b]{\mywidth\textwidth}
     \centering
     \includegraphics[width=\textwidth]{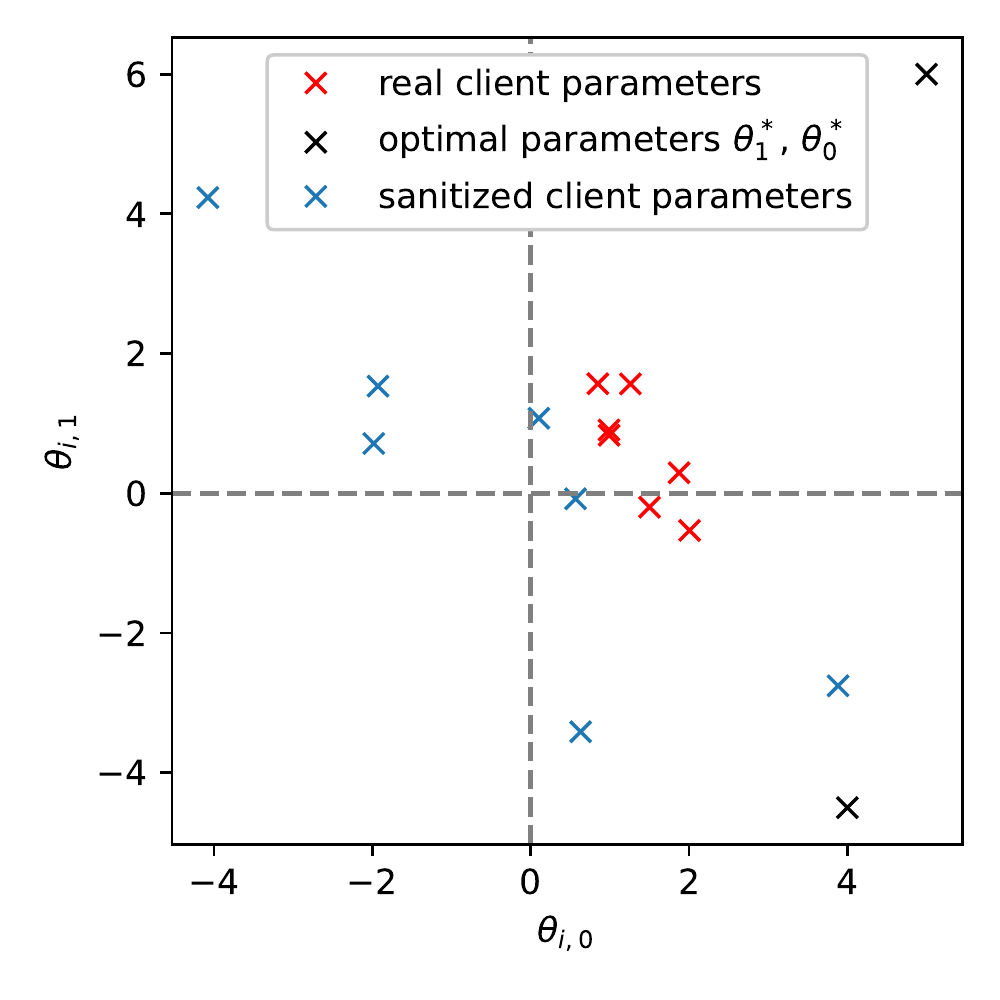}
     \caption{First round}
     \label{fig1:31}
 \end{subfigure}
 \hfill
 \begin{subfigure}[b]{\mywidth\textwidth}
     \centering
     \includegraphics[width=\textwidth]{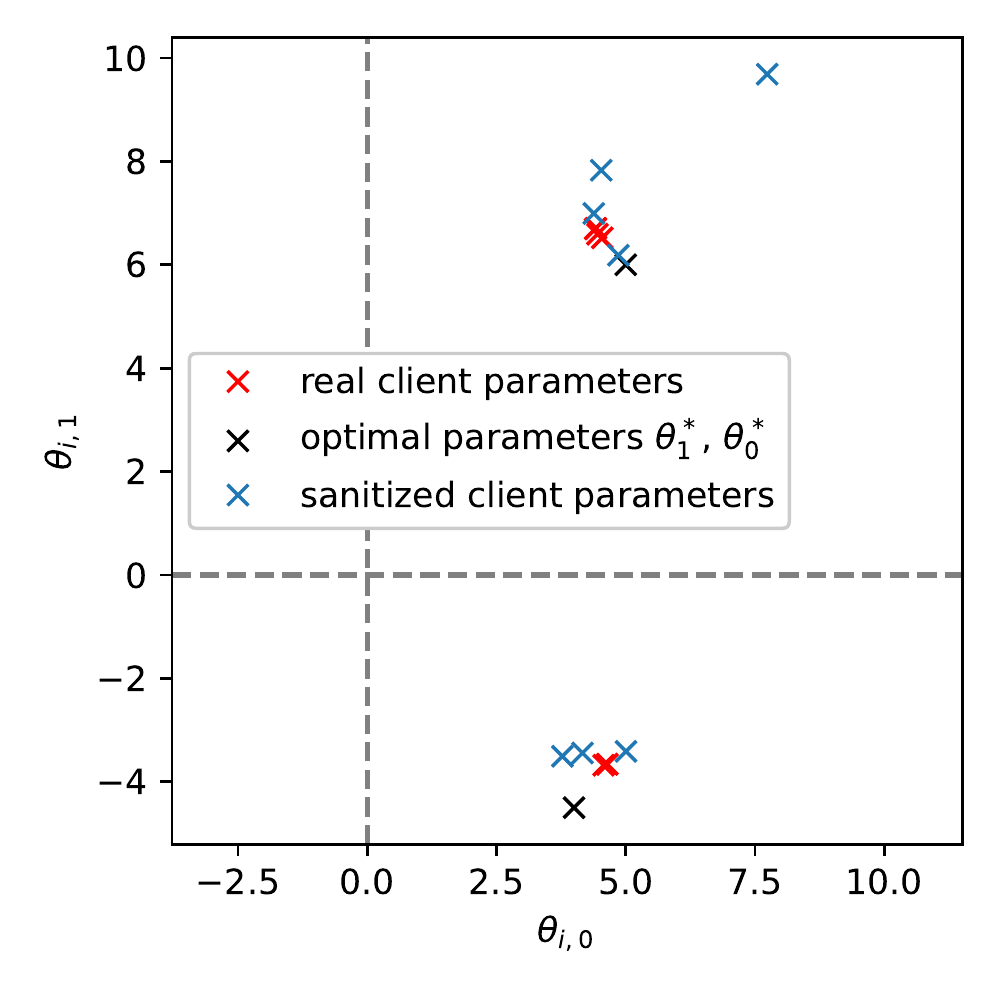}
     \caption{Best round}
     \label{fig1:32}
 \end{subfigure}
 \hfill
 \begin{subfigure}[b]{\mywidth\textwidth}
     \centering
     \includegraphics[width=\textwidth]{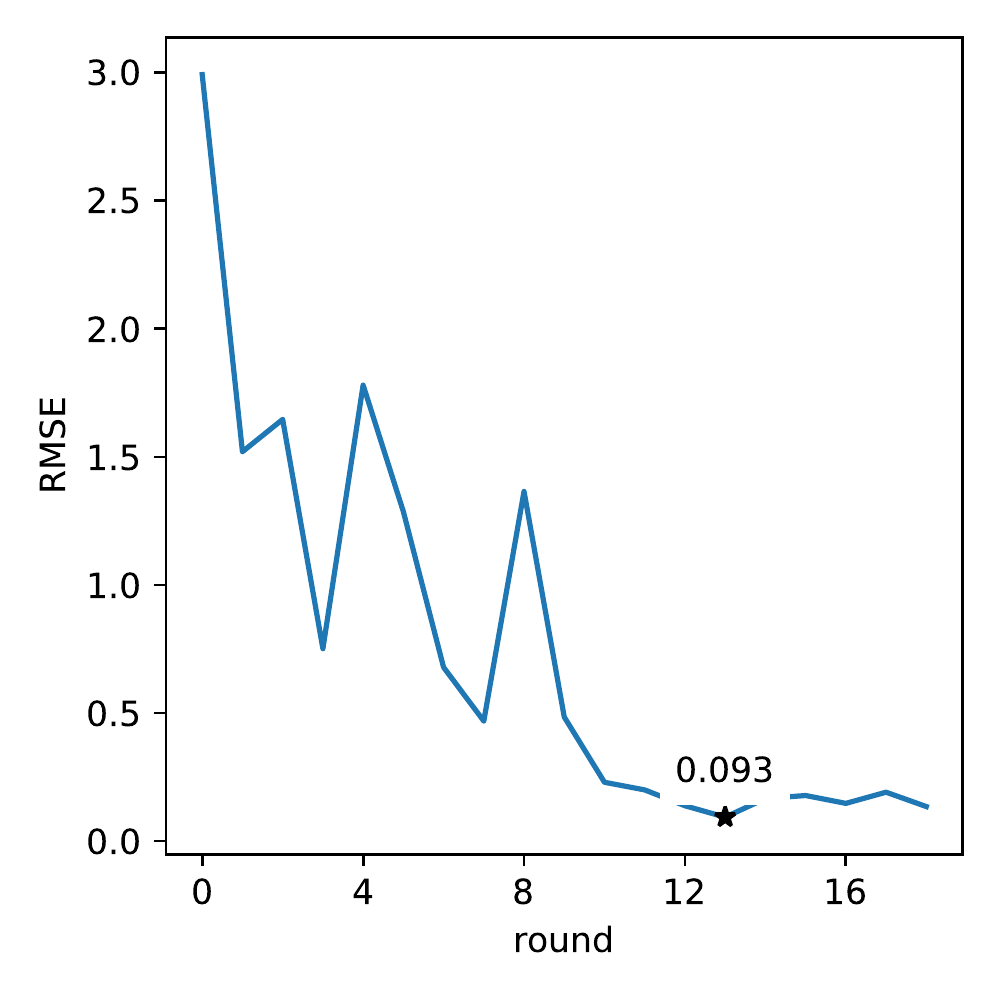}
     \caption{Validation loss}
     \label{fig1:33}
 \end{subfigure}
 \caption{Learning federated linear models with: (a, b, c) one initial hypothesis and
   non-sanitized communication, (d, e, f) two initial hypotheses and non-sanitized communication,
   (g, h, i) two initial hypotheses and sanitized communication. The first
 two figures of each row show the parameter vectors released by the clients
 to the server. The last figure of each row illustrates the trend of the
 validation loss on clients and data not involved in the optimization.}
    \label{fig1}
\end{figure*}

\begin{figure}[h]
\centering
\includegraphics[scale=0.55]{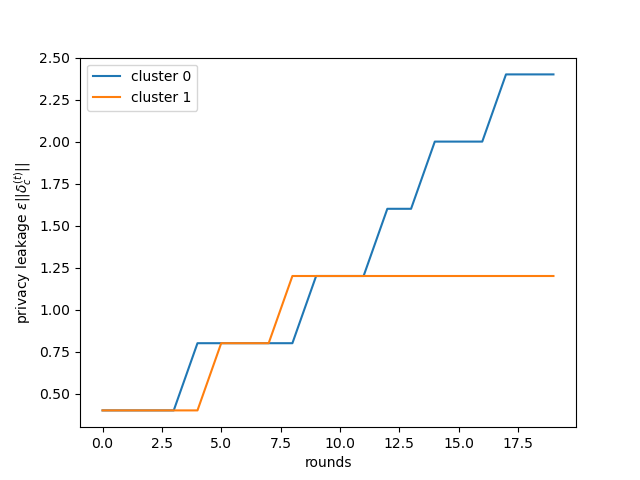}
\caption{For the experiment on synthetic data, this figure plots the max privacy leakage over clients of the same cluster for a round of training. Intervals with constant privacy leakage indicate that the clients with the largest privacy leakage were not sampled (by chance) to participate in those rounds.}
\label{priv-leak-synth}
\end{figure}

\subsection{Hospital charge data}\label{sec:hospital}
This experiment is performed on real world data, specifically, the Hospital Charge Dataset
published by the Centers for Medicare and Medicaid Services of the US Government. It contains
data about charges for the $100$ most common inpatient services and the $30$ most common 
outpatient services. It shows a great variety of charges applied by healthcare providers with
details mostly related to the type of service and the location of the provider.
Preprocessing of the dataset includes a number of procedures, the most important of which are
described here:
\begin{enumerate}
    \item[i)] Selection of the $4$ most widely treated conditions, which amount to simple
    pneumonia; kidney and urinary tract infections; hart failure and shock; esophagitis and
    digestive system disorders.
    \item[ii)] Transformation of ZIP codes into numerical coordinates in terms of longitude and
    latitude.
    \item[iii)] Setting as target the Average Total Payments, i.e. the cost of the service averaged
    among the times it was given by a certain provider.
    \item[iv)] As it is a standard procedure in the context of gradient-based optimization, 
    dependent and independent variables are brought to be in the range of the \emph{units}
    before being fed to the machine learning model. Note that this point takes the spot of the
    common feature normalization and standardization procedures,
    which we decided not to perform here
    to keep the setting as realistic as possible. In fact, both would require the knowledge of 
    the empirical distribution of all the data.
    Although it is available in simulation, it would not
    be available in a real scenario, as each user would only have access to their dataset.
\end{enumerate}
To simulate a federated learning process, healthcare providers are here considered
the set of clients willing to collaborate to train a machine learning model. Given the 
preprocessing described above, the dataset results in $2947$ clients, randomly split in train
and validation subsets with $70$ and $30$ per cent of the total clients each.
The goal is being able to predict the cost that a service would require given where it is 
performed in the country, and what kind of procedure it is. 
The model that was adopted in this context is a fully connected neural network (NN)
of two layers, with a total of $11$ parameters and Rectified Linear Unit (ReLU) activation
function. 
Inputs to the model are an increasing index which uniquely defines the healthcare service,
the longitude and latitude of the provider. Output of the model is the expected cost.
Tests have been performed to minimize the RMSE loss on the clients selected for training
($100$ per round) and at each round the performance of the model is checked against a held-out
set of validation clients, from where $200$ are sampled every time. If $30$ validation rounds
are passed without improvement in the cost function, the optimization process is terminated.
To assess the trade-off between privacy, personalization and accuracy, a different number of
initial hypotheses has been checked, as it is not known a-priori how many distributions
generated the data. For the same reason, accuracy has been checked at different values
of the noise multiplier $\nu$. Further, in order to decrease variability of the
results, a total of $10$ runs have been performed with different seeds for every 
combination of number of hypotheses and noise multiplier. Results are shown in Figure
\ref{fig:hospital}. 

When the federated training is performed with only $1$ initial hypothesis,
the accuracy of the model is poor, which is indicative of the model not being able to capture
the variety of data distributions that is being fed with. In fact, increasing to $3$ the number of
initial hypotheses for the parameter vector leads to the biggest improvement on the RMSE loss.
Additionally, we can see that the model's performance degrades with increasing values of the
noise multiplier (and therefore increasing $\varepsilon$'s), as expected. The large variability in
performance when the communication is sanitized with $\nu \in \left\{2,3,5\right\}$ may be due
to the assumption in Equation \eqref{condition1} failing to be satisfied in certain runs, 
leading to all clients being grouped under a single cluster, and reaching RMSE
comparable to that obtained with only $1$ initial hypothesis. The best results in terms of
both accuracy and low variability are when the number of initial hypotheses is set to $5$ and 
$7$. Although a prescriptive characterization of the decrease in model's performance with varying
noise multiplier levels is yet to be derived, we highlight 
how experimentally there are regions of the 
hyper-parameter space (i.e. the choice of $\nu$ and the number of initial hypotheses) where a
reasonable compromise can be found between privacy and model personalization.

Finding the  privacy leakage is straight-forward, as each time a user is required to participate in 
a training round it will enjoy $\varepsilon \Vert \delta_c^{(t)} \Vert_2 = n/\nu = 11/\nu$ differential
privacy with any point in its $\delta_c^{(t)} $-neighborhood. Accordingly, 
Figure \ref{priv-leak-hospital} provides the empirical privacy leakage distribution of the clients involved in a particular training configuration, whereas Table \ref{table:privacy-budget-correct} shows privacy leakage statics over multiple rounds and for all configurations.

\begin{table}
    \begin{center}
        \begin{tabular}{*5c}
            \toprule
            & \multicolumn{4}{c}{Hypotheses}  \\
            \midrule
            Noise Multiplier & 7 & 5 & 3 & 1 \\
            \hline
            0 & -, - & -, - & -, - & -, - \\
            \hline
            0.100 & 517.0, 1551.0 & 418.0, 1342.0 & 473.0, 1386.0 & 528.0, 1540.0 \\
            \hline
            1 & 36.3, 126.5 & 40.7, 127.6 & 44.0, 138.6 & 49.5, 147.4 \\
            \hline
            2 & 15.4, 57.8 & 14.3, 54.5 & 22.0, 69.3 & 21.5, 66.6 \\
            \hline
            3 & 7.7, 32.3 & 8.4, 36.7 & 12.5, 40.0 & 12.1, 40.0 \\
            \hline
            5 & 5.7, 21.3 & 5.9, 22.0 & 5.5, 21.6 & 5.3, 20.9 \\
            \bottomrule
        \end{tabular}
    \end{center}
    \caption{Regarding the experiment on hospital charge data, for every combination of Noise Multiplier $\times$ Number of Hypotheses, the median and maximum local privacy budgets are reported, over the whole set of clients. These values are averaged over $10$ runs with different seeds. $\nu=0$ means no privacy guarantee and infinite privacy leakage.}
    \label{table:privacy-budget-correct}
\end{table}

\begin{figure}[h]
\centering
\includegraphics[scale=0.65]{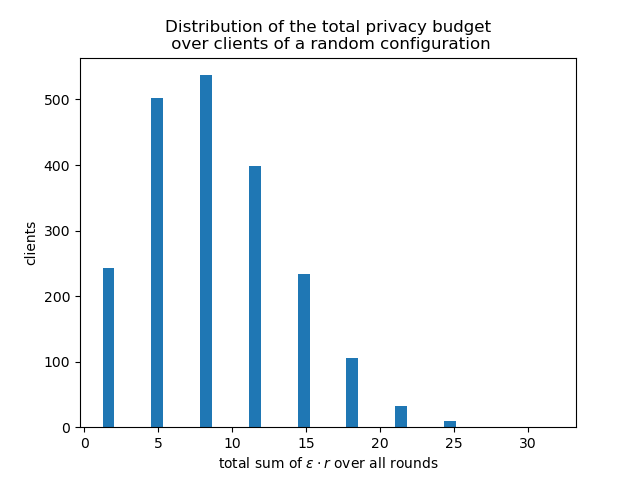}
\caption{For the experiment on hospital charge data, this histogram plots the empirical distribution of the privacy budget over the clients in a particular configuration: $\nu=3$, 5 initial hypotheses, seed $=3$, 
$r$ is the radius of the neighborhood, the total number of clients is 2062.}
\label{priv-leak-hospital}
\end{figure}

\begin{figure}[h]
    \centering
    \includegraphics[scale=0.65]{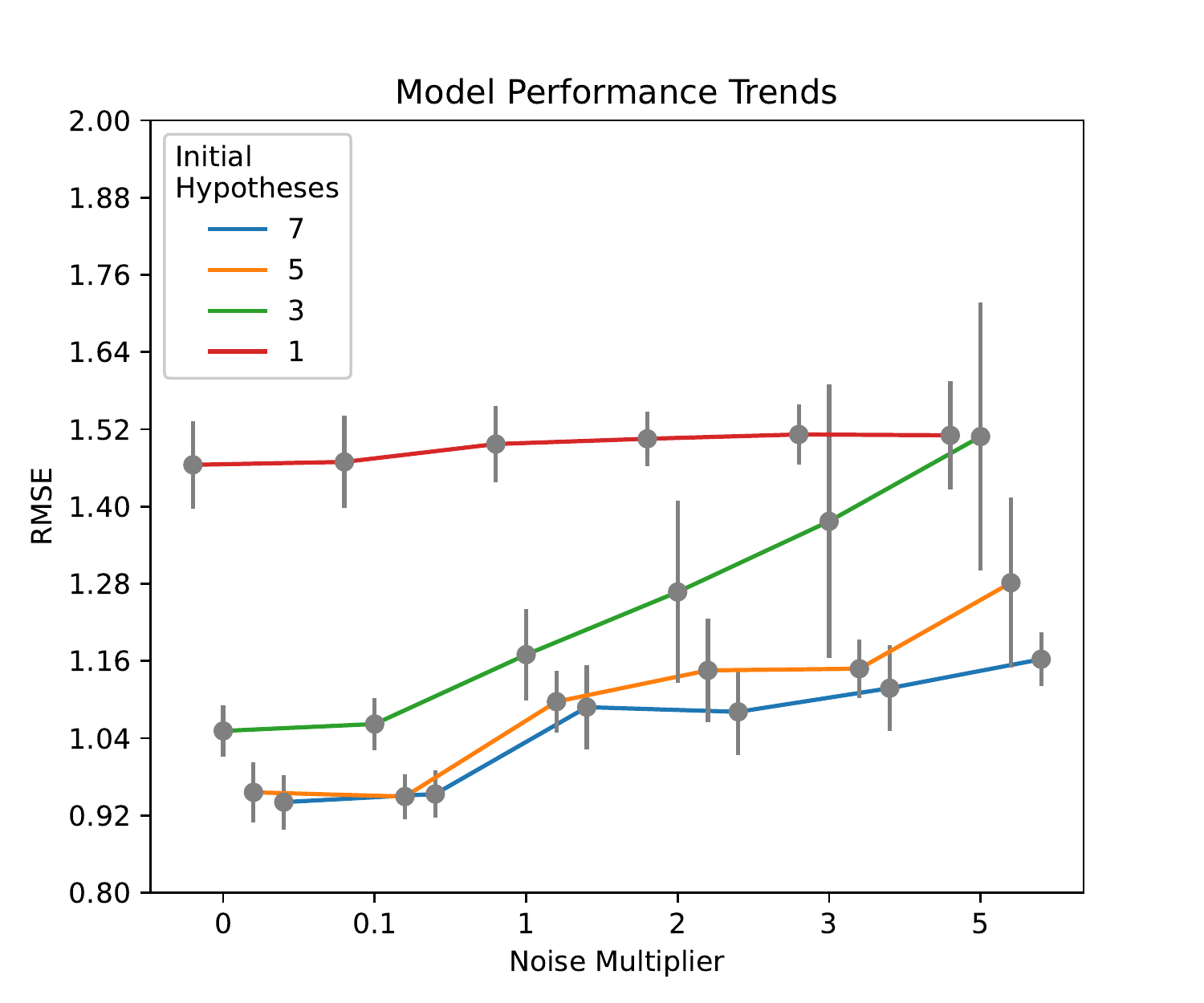}
    \caption{RMSE values for models trained with Algorithm \ref{alg:pifca} on the Hospital 
    Charge Dataset. Error bars show $\pm \sigma$, with $\sigma$ the empirical standard
    deviation. Lower RMSE values are better for accuracy.}
    \label{fig:hospital}
\end{figure}

\subsection{FEMNIST image classification} \label{nn}
In this Section we evaluate how Algorithm \ref{alg:pifca} behaves when tested beyond the scope
of its applicability, as described in Section \ref{sec:limitations}. 
The task consists in performing image classification on the
FEMNIST \cite{caldas2018leaf} dataset, which is a standard benchmark dataset for federated
learning, based on EMNIST  \cite{cohen2017emnist} and with the data points
grouped by user. It consists of a large number of images of handwritten digits, lower and upper
case letters of the latin alphabet. As a pre-processing step,
images of client $c$ are rotated $90$ degrees counter-clockwise depending on the realization of
the random variable $\text{rot}_c \sim \text{Bernoulli}(0.5)$. This is a common practice in
machine learning  to simulate local datasets held by different 
clients being generated by very different distributions 
\cite{ghosh, goodfellow2013empirical, kirkpatrick2017overcoming, lopez2017gradient}.

The chosen architecture is described in Table \ref{tab:arch} and yields a parameter vector 
$\theta \in \mathbb{R}^{n_0}$, $n_0 = 1206590$.
Runs are performed with a maximum of $500$ rounds of
federated optimization, unless $5$ consecutive validation rounds are conducted without
improvements on the validation loss. The latter is evaluated on a held out set of clients,
consisting of $10\%$ of the total number. Validation is performed every $5$ training rounds,
thus the process terminates after $25$ rounds without model's performance improvement.
The optimization process aims to minimize either the RMSE loss or the 
Cross Entropy loss \cite{zhang2018crossentropy} (to further depart from earlier assumptions)
between model's predictions and the target class. Results are presented in Table \ref{nn-results}.
For Cross Entropy, we see a wide range of $\nu$ values with comparable average accuracy. In
particular, the best performing model is being trained with a non-zero noise multiplier,
which may be explained by a regularizing effect of the additive noise. This is especially true
for the RMSE loss, where the best performing model is trained with $\nu = 3$. For all the runs,
we highlight a generally low standard deviation in the results.

Note that with the choice of the range of noise multipliers $\nu$ the corresponding value for the
privacy leakage $\varepsilon \Vert \delta_c^{(t)} \Vert_2 = n/\nu = n_0/\nu$ wold be enormous,
and would not provide any meaningful guarantee, in theory. As already mentioned in Section 
\ref{sec:limitations}, that is true as long as we want to use the Laplace mechanism to be
effective against \emph{any} adversary. Still, it is possible to validate, in practice, whether
it can protect against a \emph{specific} attack: DLG \cite{deep-leakage}. The threat model for
this attack is very fitting for a federated learning scenario. In brief:
an honest-but-curious server communicates to a set of clients the parameter vector
$\theta_{\bar{j}}^{(t)}$ (among the other $k-1$ hypotheses) at iteration $t$ and 
receives the updated model parameters $\theta_{\bar{j}, c}^{(t)}$ 
from client $c$. The server can easily retrieve the true
parameter update $\delta_c^{(t)} = \theta_{\bar{j}, c}^{(t)} - \theta_{\bar{j}}^{(t)} $
if no sanitization is performed. Under the assumption that the client
performs one single optimization step, this results in being the gradient
scaled down by the local step size. The server then tries to recreate the input samples that
generated such gradient. The process of gradient matching can be cast into a nonlinear
minimization problem and be solved itself by gradient descent.

If sanitization is performed, the server is left with matching a corrupted gradient.
In \cite{deep-leakage} the authors evaluate disturbing the gradient with Gaussian and Laplace 
(with $L_1$ distance) noise as a privacy mechanism. In the following, we evaluate if the
distribution of the Laplace mechanism (under $L_2$ distance)
in Lemma \ref{lemma_laplace} is effective in protecting from the DLG attack. 
In order to be on the safe side, tests were conducted with the best 
possible conditions for the attacker:
a modified model architecture, so that DLG conditions are met (e.g. all activation functions
are replaced with the sigmoid non-linearity to have a twice-differentiable model); batch
size reduced to $1$, as the gradient matching optimization problem is easier to solve in this
setting; and a single local optimization step. Since the gradient can vary widely for parameters 
in different layers of the neural network, we apply the Laplace mechanism independently on the
parameter vector of each NN layer, and communicate to the server the collection of sanitized
parameter vectors. The practice of sanitizing each layer independently has already been
effectively evaluated in \cite{liu2020padl}.

In Figure \ref{fig:dlg} results are reported for application of
the noise multiplier values adopted also in Table \ref{nn-results}. When $\nu = 10^{-3}$ the
ground truth image is fully reconstructed. Up to $\nu = 10^{-1}$ we see that at least
partial reconstruction is possible. Finally, for $\nu \geq 1$ we see that, experimentally,
the DLG attack fails to reconstruct input samples.

\begin{table}[htbp] 
\centering
\begin{tabular}{ccccc}
\hline
                 & \multicolumn{2}{c}{Cross Entropy loss}
                 & \multicolumn{2}{c}{RMSE loss}                                                                                                  \\ \hline
\begin{tabular}[c]{@{}c@{}}Noise\\ Multiplier\end{tabular}  & \begin{tabular}[c]{@{}c@{}}Average \\ Accuracy\end{tabular} & \begin{tabular}[c]{@{}c@{}}Standard\\ Deviation\end{tabular} & \begin{tabular}[c]{@{}c@{}}Average\\ Accuracy\end{tabular} & \begin{tabular}[c]{@{}c@{}}Standard\\ Deviation\end{tabular} \\ \hline
0                & 0.832                                                       & $\pm$ 0.012                                                        & 0.801                                                      & $\pm$ 0.001                                                        \\
0.001            & 0.843                                                       & $\pm$ 0.006                                                        & 0.813                                                      & $\pm$ 0.014                                                        \\
0.01             & 0.832                                                       & $\pm$ 0.017                                                        & 0.805                                                      & $\pm$ 0.008                                                        \\
0.1              & 0.834                                                       & $\pm$ 0.026                                                        & 0.808                                                      & $\pm$ 0.019                                                        \\
1                & 0.834                                                       & $\pm$ 0.014                                                        & 0.814                                                      & $\pm$ 0.012                                                        \\
3                & 0.835                                                       & $\pm$ 0.017                                                        & 0.825                                                      & $\pm$ 0.010                                                        \\
5                & 0.812                                                       & $\pm$ 0.016                                                        & 0.787                                                      & $\pm$ 0.003                                                        \\
10               & 0.692                                                       & $\pm$ 0.002                                                        & 0.687                                                      & $\pm$ 0.014                                                        \\
15               & 0.561                                                       & $\pm$ 0.005                                                        & 0.622                                                      & $\pm$ 0.003                                                        \\ \hline
\end{tabular}
\caption{Average classification accuracy and standard deviation of a convolutional neural
network over three runs seeded with different values. Experiments tested the effect of increasing noise values on the
validation accuracy.} \label{nn-results}
\end{table}

\begin{figure}[htbp]
  \centering
  \includegraphics[scale=0.65]{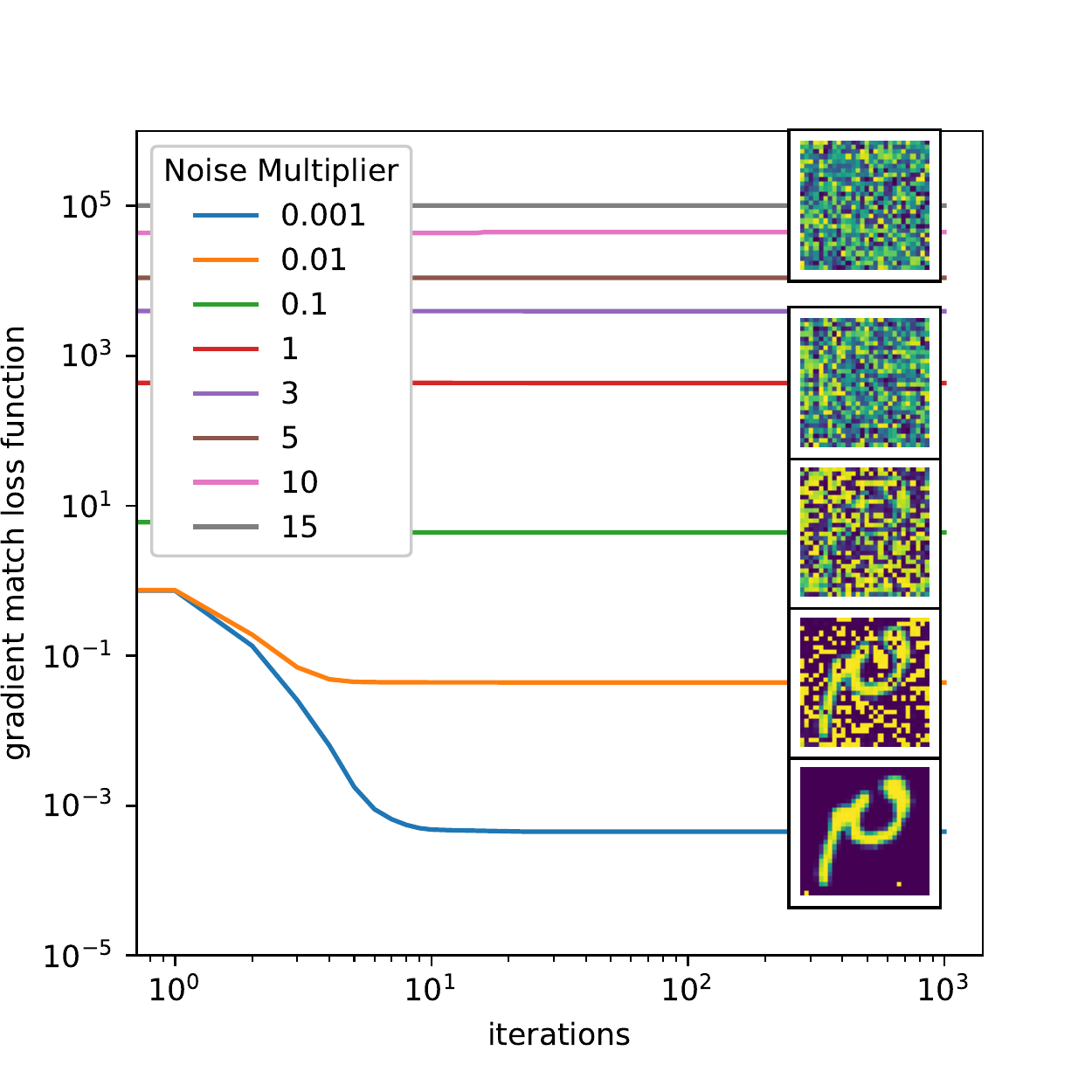}
  \caption{Effects of the Laplace mechanism in Lemma \ref{lemma_laplace} with
  different noise multipliers (ref) as a defense strategy against the DLG
attack.}\label{fig:dlg}
\end{figure}

\begin{table}[htbp]
\centering
\begin{tabular}{cc}
\hline
Layer & Properties \\ \hline
2D Convolution & \begin{tabular}[c]{@{}c@{}}kernel size: (2,2)\\ stride: (1,1)\\ nonlinearity: ReLU\\ output features: 32\end{tabular} \\ \hline
2D Convolution & \begin{tabular}[c]{@{}c@{}}kernel size: (2,2)\\ stride: (1,1)\\ nonlinearity: ReLU\\ output features: 64\end{tabular} \\ \hline
2D Max Pool & \begin{tabular}[c]{@{}c@{}}kernel size: (2,2)\\ stride: (2,2)\\ nonlinearity: ReLU\end{tabular} \\ \hline
Fully Connected & \begin{tabular}[c]{@{}c@{}}nonlinearity: ReLU\\ units: 128\end{tabular} \\ \hline
Fully Connected & \begin{tabular}[c]{@{}c@{}}nonlinearity: ReLU\\ units: 62\end{tabular} \\ \hline
\end{tabular}\caption{NN architecture adopted in the experiments of Section \ref{nn}} \label{tab:arch}
\end{table}

\section{Conclusion}\label{section:Conclusion}
This work presents the challenging task of optimizing federated learning models over the three 
dimensions of privacy, accuracy and personalization. The problem of 
preserving the privacy of individuals is treated under the framework of $d$-privacy, which 
provides guarantees of indistinguishability that depend on the distance between any two points.
Here, such points lie in the parameter space of machine learning models, which are sanitized and  
communicated to a central server for aggregation, in order to get closer to the
optimal parameters iteratively. Given that the data distribution among individuals is unknown,
it is reasonable to assume a mixture of multiple distributions.
Clustering the sanitized parameter vectors 
released by the clients with the $k$-means algorithm shows to be a good proxy for aggregating
clients with similar data distributions. This is possible because $d$-private mechanisms
preserve the topology of the domain of true values. To that end, the Laplace mechanism under
Euclidean distance was defined, together with a procedure for sampling from its distribution.
Experimental results validate our claims and the limitations of the theory developed here are
discussed. In particular, our privacy preserving mechanism shows to be promising when machine
learning models have a \emph{small} number of parameters. Although formal privacy guarantees degrade sharply with large machine learning models, we show experimentally that the Laplace mechanism under Euclidean distance is effective at least against client's data reconstruction by DLG attack.

As future work, we want to explore other privacy mechanisms, which may be  more effective in providing a good trade-off between privacy and accuracy in the context of machine learning.  
Furthermore, we are interested in studying more complex federated learning scenarios where participants and datasets may change over time.

\section*{Acknowledgment}
The work of Sayan Biswas, Kangsoo Jung, and Catuscia Palamidessi was supported by the European Research Council (ERC) project HYPATIA under the European Union’s Horizon 2020 research and innovation programme. Grant agreement no. 835294.

\bibliographystyle{splncs04}
\bibliography{biblio}

\newpage

\end{document}